\documentclass[letterpaper]{article}
\usepackage{aaai2027}
\usepackage[hyphens]{url}
\usepackage{graphicx}
\usepackage{natbib}
\usepackage{caption}
\usepackage{booktabs}
\usepackage{amsmath}
\usepackage{amssymb}
\usepackage{amsthm}
\newtheorem{proposition}{Proposition}
\usepackage{xcolor}
\usepackage{colortbl}
\usepackage{tcolorbox}
\usepackage{placeins}
\definecolor{calloutblue}{HTML}{2868C7}
\definecolor{clipblue}{HTML}{2F6FA3}
\definecolor{sigorange}{HTML}{D07A35}
\definecolor{pegreen}{HTML}{3B8C6E}
\definecolor{goodgreen}{HTML}{287A52}
\definecolor{badred}{HTML}{B64B52}
\newcommand{\goodcell}[1]{\cellcolor{goodgreen!13}\textcolor{goodgreen}{\bfseries #1}}
\newcommand{\badcell}[1]{\cellcolor{badred!13}\textcolor{badred}{\bfseries #1}}
\newcommand{\neutralcell}[1]{\cellcolor{black!5}\textcolor{black!65}{\bfseries #1}}
\newcommand{\goodnum}[1]{\textcolor{goodgreen}{\bfseries #1}}
\newcommand{\badnum}[1]{\textcolor{badred}{\bfseries #1}}
\newcommand{\neutralnum}[1]{\textcolor{black!65}{\bfseries #1}}
\newtcolorbox{insightbox}{
  colback=calloutblue!4,
  colframe=calloutblue,
  boxrule=.55pt,
  arc=1.5mm,
  outer arc=1.5mm,
  left=3pt,
  right=3pt,
  top=2pt,
  bottom=2pt,
  before skip=3pt,
  after skip=3pt,
  before upper=\raggedright,
  fontupper=\small
}
\newcommand{\fpr}{FPR95}
\newcommand{\ceg}{\textsc{CEG}}
\newcommand{\cegp}{\textsc{CEG-P}}

\title{Level, Sharpness, and Corpus: \\ Why Zero-Shot OOD Detector Rankings Do Not Transfer}

\author{
    Ignacio M. De la Jara\textsuperscript{1, 3},
    Cristian Rodriguez-Opazo\textsuperscript{2},
    Stephen Gould\textsuperscript{2},
    Damith Ranasinghe\textsuperscript{1,3}
}

\affiliations{
    \textsuperscript{1}University of Adelaide,
    \textsuperscript{2}Australian National University,
    \textsuperscript{3}Naval Group Pacific\\
    ignacio.mezadelajara@adelaide.edu.au
}

\begin{document}
\maketitle

\begin{abstract}
Selecting a zero-shot out-of-distribution (OOD) detector for a new deployment
is typically based on benchmark rankings, implicitly assuming that the
highest-ranked detector will transfer across domains. We show that this
assumption does not hold. Through a controlled portability audit across
seventeen in-distribution datasets, three vision-language models, and seven
representative zero-shot OOD detectors, we find that detector rankings reverse
across deployments, every detector exceeds $80\%$ FPR95 on at least one domain,
and the preferred detector depends on both the in-distribution data and the
underlying VLM. We trace these reversals to complementary evidence channels in
vision-language logits. Corpus-free detectors rely on different combinations of
absolute match level and relative or spatial sharpness, while WordNet-based
methods additionally depend on external semantic coverage. A simple proposition
shows that level and sharpness cannot generally be recovered from one another,
explaining why no single detector transfers reliably across deployments.
Motivated by this diagnosis, we introduce the Complementary Evidence Guard
(CEG), a detector-agnostic wrapper that preserves complementary evidence through
a non-compensatory fusion of the base detector, level, and sharpness using only
empirical in-distribution percentiles. Controls replacing these channels with
entropy, logit variance, or random noise do not reproduce the gains. Without
OOD samples, auxiliary corpora, or learned fusion, CEG reduces detector
sensitivity and improves GL-MCM from $38.1$ to $28.8$ and MCM from $42.6$ to
$30.5$ family-balanced FPR95.
\end{abstract}

\begin{figure}[!t]
\centering
\includegraphics[width=\columnwidth]{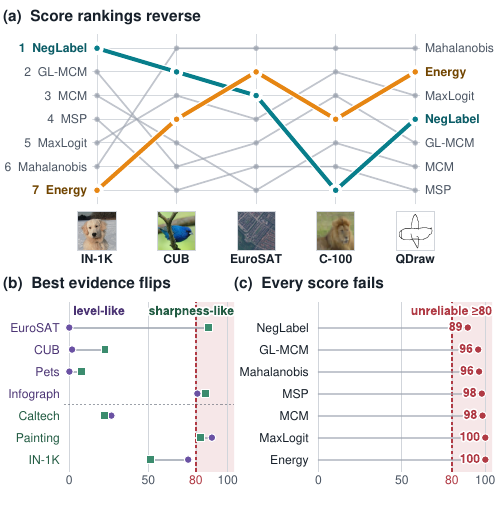}
\vspace{-2em}
\caption{\textbf{Benchmark winners do not transfer across ID domains.}
(a) Detector rankings reverse across ID domains.
(b) Different ID domains favour different evidence.
(c) Every detector fails on at least one ID domain, showing that no single
zero-shot score is universally reliable.}
\label{fig:teaser}
\vspace{-1em}
\end{figure}

\section{Introduction}

Frozen vision-language models (VLMs) have made zero-shot
out-of-distribution (OOD) detection practical across a wide
range of applications. Given an image and a set of
in-distribution (ID) class names, a detector can estimate
whether the image belongs to the ID distribution without
task-specific retraining
\cite{radford2021clip,ming2022mcm}. The remaining challenge
is deciding which detector should be deployed. Existing
methods are typically ranked on a benchmark, and the
highest-performing score is then reused in new deployment
settings. Figure~\ref{fig:teaser} shows that this practice is
unreliable. Across a seventeen-dataset CLIP audit,
detector rankings reverse repeatedly as the ID domain
changes. The same conclusion holds across three
vision-language models in a separate cross-VLM
portability study. The detectors themselves remain valid.
What fails is the assumption that benchmark rankings
transfer across deployments.

This assumption has become widespread because benchmark
evaluation has driven much of the recent progress in
zero-shot OOD detection. Methods are commonly compared on
ImageNet-centred benchmarks, where improvements are often
reported as a single average across ID--OOD shifts.
Consequently, the detector with the best benchmark
performance is frequently treated as the preferred
deployment choice. Previous work questioned whether these
benchmarks faithfully reflect deployment because of dataset
contamination and benchmark bias
\cite{bitterwolf2023ninco,fort2021limits}. We ask a
different question. Even if the benchmark itself is valid,
does the detector ranking obtained on one ID domain remain
valid when the ID domain changes?

To answer this question, we conduct a controlled portability
study while holding implementations, preprocessing, and
evaluation protocols fixed. Across seventeen ID domains,
three VLMs, and seven representative post-hoc OOD
detectors, rankings reverse repeatedly as either the ID
domain or the underlying VLM changes. Every detector
exceeds $80\%$ FPR95 on at least one ID domain, and no
method remains consistently superior. The problem is
therefore not simply identifying the detector with the best
average benchmark performance, but understanding why
different detectors succeed on different ID domains.

Figure~\ref{fig:teaser} suggests the underlying mechanism.
Existing detectors rely on complementary evidence channels.
\emph{Level} measures how strongly an image matches the ID
classes in absolute terms, whereas \emph{sharpness} measures
how clearly one class stands out relative to the remaining
vocabulary and spatial locations. Some methods additionally
exploit external semantic coverage. Different ID domains
preserve different combinations of these signals, making no
single evidence channel universally sufficient. This
perspective explains why detector rankings reverse across
ID domains and suggests preserving complementary evidence
rather than relying on a single score. Instead of seeking a
universally best detector, the goal becomes reducing
deployment sensitivity by retaining evidence that
individual scores discard, leading to more stable decisions
across ID domains.

This perspective changes the deployment objective. If
different ID domains require different evidence, selecting
a single benchmark winner is no longer sufficient. Instead,
deployment should preserve complementary evidence so that
decisions remain reliable when the informative evidence
changes across ID domains. Guided by this principle, we
propose the \emph{Complementary Evidence Guard} (CEG), a
detector-agnostic framework that calibrates complementary
evidence using unlabeled ID data and combines it through a
non-compensatory decision rule. Rather than replacing
existing detectors, CEG is designed to make them more
robust to changes in the ID domain by preserving the
evidence that individual scores discard.

\paragraph{Contributions.}
\begin{itemize}

\item \textbf{A portability audit.}
We show that zero-shot OOD detector rankings do not transfer across ID domains or VLM families.

\item \textbf{An evidence-level explanation.}
We explain ranking reversals through complementary evidence channels whose utility depends on the ID domain.

\item \textbf{A complementary evidence guard.}
We introduce a detector-agnostic wrapper that preserves complementary evidence without retraining.

\item \textbf{A deployment-oriented evaluation protocol.}
We advocate reporting worst-domain risk and revalidating detectors whenever the ID domain or VLM changes.

\end{itemize}

\begin{insightbox}
\textcolor{calloutblue}{\textbf{In short.}}
Benchmark winners are not necessarily deployment winners.
Different ID domains preserve different evidence, making
complementary evidence more reliable than relying on a
single detector.
\end{insightbox}
\section{Related Work}
\label{sec:related}

\paragraph{Post-hoc OOD detection.} Classical detectors threshold classifier
confidence: maximum softmax probability \citep{hendrycks2017baseline}, maximum
logit \citep{hendrycks2022scaling}, free energy \citep{liu2020energy}, or
Mahalanobis distance \citep{lee2018mahalanobis}. VLM variants apply related
statistics to image-text logits: MCM uses the ID-vocabulary softmax
\citep{ming2022mcm}; GL-MCM adds a local term \citep{miyai2025glmcm}; and
NegLabel adds WordNet negatives \citep{jiang2024neglabel,miller1995wordnet}.
ZOC, CLIPN, NegPrompt, and LoCoOp instead add generated negatives, a learned
``no'' embedding, negative prompts, or ID prompt adaptation
\citep{esmaeilpour2022zoc,wang2023clipn,li2024negprompt,miyai2023locoop}.
LogitGap contrasts selected logits \citep{liang2025logitgap}. Conjugated
Semantic Pool (CSP) mines external WordNet noun/adjective prompts and scores
positive mass against that expanded pool \citep{chen2024csp}. Later variants robustly weight negatives
\citep{peng2026closelook} or adapt proxies at test time
\citep{zhang2024adaneg}. Large-scale classifier studies also expose the limits
of single-benchmark conclusions \citep{fort2021limits}. Most VLM evidence
remains ImageNet-centered; we vary the
ID dataset and isolate the external corpus's role.

\paragraph{Resource assumptions.} MSP, MaxLogit, Energy, and MCM need only the
ID vocabulary; GL-MCM also reads patches \citep{ming2022mcm,miyai2025glmcm};
Mahalanobis fits ID statistics \citep{lee2018mahalanobis}; and NegLabel, CSP,
and AdaNeg add external semantics
\citep{jiang2024neglabel,chen2024csp,zhang2024adaneg}. Prompt learning adds
adaptation. We keep these resource tiers separate after wrapping; \ceg{}
calibrates its channels to ID percentiles before taking their minimum.

\paragraph{Benchmarks and VLM portability.} OpenOOD standardizes ImageNet-scale
evaluation \citep{yang2022openood,zhang2023openood}. Prior work varies CLIP
encoders, and NegLabel/EOE also test ALIGN, GroupViT, or AltCLIP, but keep
ImageNet-1K and its usual OOD suite
\citep{ming2022mcm,jiang2024neglabel,cao2024eoe}. We instead test score-rank
invariance while changing both VLM and ID domain.

\begin{figure*}[t]
\centering
\includegraphics[width=0.99\textwidth]{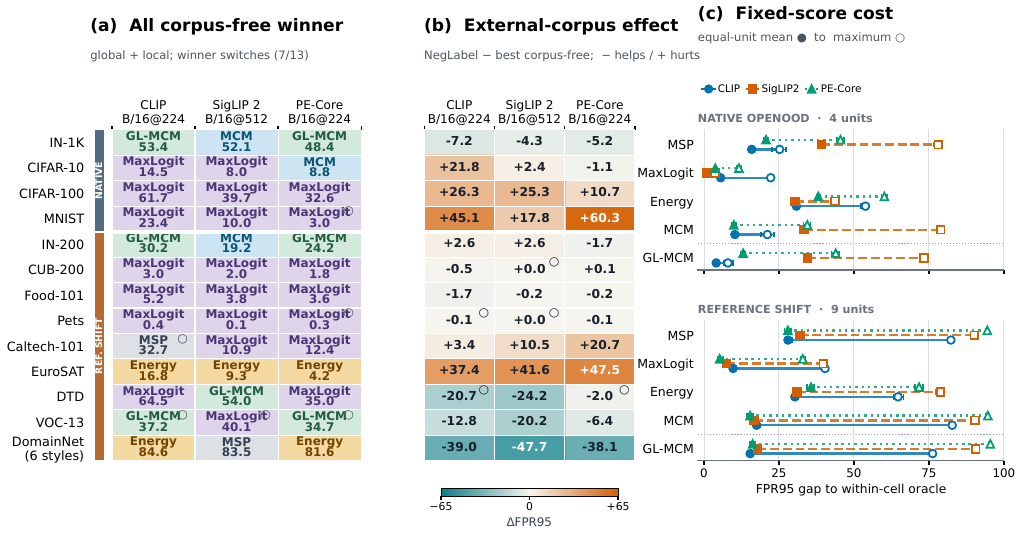}
\vspace{-1em}
\caption{\textbf{Portability fails because detectors preserve different evidence.}
(a) The best corpus-free detector changes across ID domains.
(b) The contribution of external semantic coverage depends on the ID domain and VLM.
(c) The cost of fixing one score: filled markers show the average FPR95 gap to the best score for each unit, and open markers show the worst-case gap. Farther right means worse transfer.}
\label{fig:multibackbone}
\end{figure*}

\section{The Portability Audit}
\label{sec:audit}

Zero-shot OOD work primarily develops and evaluates methods
on CLIP and ImageNet, the dominant benchmark in the
vision-language OOD literature, before deploying the same
formula on different wID domains or VLMs. This implicitly
assumes that a runnable detector retains its empirical
rank. We begin by testing that assumption.

\paragraph{A benchmark winner is not a deployment winner.}
If benchmark rankings were portable, the detector with the
best benchmark average should also be the safest deployment
choice. Figure~\ref{fig:multibackbone}c shows that this is
not the case. Every fixed score exhibits a large gap between
its mean and worst-domain performance. Across the complete
seventeen-domain CLIP study (Appendix
Table~\ref{tab:audit}), every detector exceeds $80$ \fpr{}
on at least one deployment, and the best worst case is
still $89$. Selecting the lowest benchmark mean therefore
does not control the failures practitioners encounter after
the ID domain changes.

To distinguish failures caused by the ID domain
from those introduced by the benchmark itself, we evaluate
two complementary deployment settings. \emph{Native}
evaluations use each dataset's official OpenOOD-v1.5
benchmark and OOD suite \citep{zhang2023openood},
whereas \emph{reference-shift} evaluations score different
ID datasets against a shared ImageNet-based OOD
reference. Together these separate changes caused by the
deployment domain from those introduced by the benchmark.
Complete implementation details are given in
App.~\ref{app:protocol}.

\paragraph{The model changes the answer too.}
Figure~\ref{fig:multibackbone}a shows that replacing only
the underlying VLM also changes the preferred detector. The
all-five winner changes in $7/13$ deployment units (two
native and five reference-shift), while the frozen global
subset still changes in $5/13$ units after controls. Winner
probability exceeds $0.95$ in $33/39$ all-method cells, so
these reversals are not merely close statistical ties.
Neither the deployment domain nor the representation alone
determines a universally best detector, suggesting that the
answer instead depends on the evidence each detector
preserves.

\paragraph{The failures are complementary.}
Figure~\ref{fig:multibackbone}b explains why rankings
reverse. NegLabel succeeds when WordNet brackets the
deployment shift but fails on CIFAR-like domains, whereas
Mahalanobis benefits from compact ID feature clusters but
deteriorates on broad ImageNet deployments. Their nearly
disjoint strengths indicate that domains do not reward
universally better detectors; they reward different evidence
channels. This naturally raises the central question: 
what evidence do existing detectors actually preserve? 
We answer that question in Section~\ref{sec:mechanism}.
% Section~\ref{sec:mechanism} traces these
% differences to absolute match level and relative/spatial
% sharpness, while the benchmark analysis below shows why
% the evaluation itself must also reflect the intended
% deployment setting.

\begin{figure}[t]
\centering
\includegraphics[width=\columnwidth]{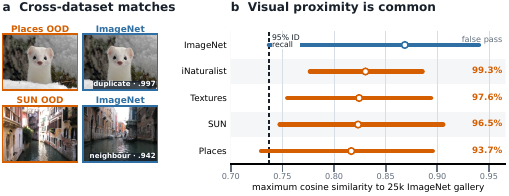}
\caption{\textbf{Benchmark composition is part of the audit.}
(a) A verified Places/ImageNet duplicate and a selected SUN
neighbour illustrate that label-defined OOD need not be
visually disjoint. (b) Maximum-cosine distributions overlap
substantially between ImageNet and the traditional OOD
sources. Full protocol: Appendix~\ref{app:benchmark-choice}.}
\label{fig:classic-overlap-compact}
\end{figure}

\paragraph{Benchmark composition is part of the audit.}
Portability should be evaluated independently of artefacts
introduced by the benchmark itself. Traditional ImageNet
protocols define far-OOD using iNaturalist, SUN, Places,
and Textures \citep{huang2021mos}, although label-defined
OOD need not be visually or data disjoint. Figure
\ref{fig:classic-overlap-compact} shows both a verified
Places/ImageNet duplicate and substantial overlap between
the corresponding maximum-cosine distributions, where
$93.7$--$99.3\%$ of nominally OOD images pass the
ImageNet $95\%$ recall threshold. This probe does not
estimate contamination prevalence, but it shows that the
traditional benchmark is not visually clean. We therefore
adopt OpenOOD-v1.5, which explicitly separates near- and
far-OOD shifts while avoiding these known benchmark issues
\citep{zhang2023openood,bitterwolf2023ninco}.
\section{Mechanism: Level and Sharpness Are Complementary Evidence}
\label{sec:mechanism}

\begin{figure}[t]
\centering
\includegraphics[width=0.99\columnwidth]{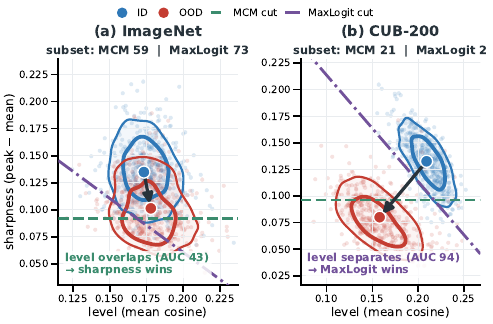}
\caption{\textbf{Level and sharpness separate different shifts.}
Sharpness separates ImageNet, whereas absolute match level separates CUB-200,
explaining the MCM/MaxLogit reversal. Contours and header FPR95 use a diagnostic
subset (20 ID images/class; at most 3,000/OOD source); Tables~\ref{tab:ceg}
and~\ref{tab:audit} report separate protocols. Horizontal position is mean
cosine, the level term in MaxLogit.}
\label{fig:channels}
\end{figure}

The portability audit showed that detector rankings reverse because different
ID domains reward different evidence channels. We now ask what that
evidence is. Figure~\ref{fig:channels} shows that CLIP cosine logits contain
two partially independent signals:

\begin{itemize}
\item the \textbf{level} $\max_c \ell_c$: how strongly the image matches its
best anchor in absolute terms;
\item the \textbf{sharpness}
$\max_c\ell_c-\operatorname{mean}_c\ell_c$: how much the best match stands out
from the remaining vocabulary and, spatially, how much the strongest image
region stands out from the rest of the image.
\end{itemize}

Different samples can fail through different channels. An
image may strongly match the ID vocabulary in absolute
terms yet lack a coherent class peak, whereas another may
produce a sharp winning class despite remaining atypical in
absolute level. A single score therefore cannot capture every
failure, motivating the non-compensatory guard introduced
next.

\paragraph{How existing detectors use these channels.}
The previous analysis explains why both level and sharpness
are needed. Existing detectors, however, do not preserve
both. Instead, each detector implicitly commits to one form
of evidence, which explains why their rankings reverse across ID domains.

\paragraph{MCM discards level.}
The softmax formulation of MCM obscures a simple but
important fact.

\begin{proposition}
\label{prop:shift}
$S_{\mathrm{MCM}}(x)=\max_c \operatorname{softmax}(\ell/T)_c$ is invariant to
additive logit shifts $\ell_c \mapsto \ell_c + \gamma(x)$, and for
$T\!\gg\!R$ (logit range),
$S_{\mathrm{MCM}} = \tfrac{1}{K} + \tfrac{1}{KT}\left(\max_c\ell_c -
\operatorname{mean}_c\ell_c\right) + O(R^2/T^2)$.
\end{proposition}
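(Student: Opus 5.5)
Two claims need proof: exact shift invariance, and a first-order Taylor expansion in $1/T$. For invariance we factor $e^{\gamma(x)/T}$ out of numerator and denominator:
\[
\operatorname{softmax}\big((\ell+\gamma(x)\mathbf{1})/T\big)_c
=\frac{e^{\gamma(x)/T}e^{\ell_c/T}}{e^{\gamma(x)/T}\sum_{j}e^{\ell_j/T}}
=\operatorname{softmax}(\ell/T)_c .
\]
Every coordinate is unchanged, so the maximum over $c$ is unchanged. This holds for any per-sample shift $\gamma(x)$, because $T$ and $K$ do not depend on the logits.

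For the expansion we use the invariance just proved and choose $\gamma(x)=-\bar\ell$, where $\bar\ell=\operatorname{mean}_c\ell_c$. We then work with centred logits $z_c=\ell_c-\bar\ell$. These satisfy $\sum_c z_c=0$ and $|z_c|\le R$, with $R=\max_c\ell_c-\min_c\ell_c$. Set $\epsilon_c=z_c/T$, so $|\epsilon_c|\le R/T\ll 1$. Taylor expansion gives $e^{\epsilon_c}=1+\epsilon_c+O(\epsilon_c^2)$, and the remainder is uniform because $|\epsilon_c|\le R/T\le 1$. Summing over $c$, the linear terms cancel by centring, so the denominator is $K\big(1+O(R^2/T^2)\big)$. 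Hence
\[
\operatorname{softmax}(z/T)_c=\frac{1+\epsilon_c+O(R^2/T^2)}{K\,(1+O(R^2/T^2))}=\frac1K+\frac{z_c}{KT}+O\!\Big(\frac{R^2}{KT^2}\Big).
\]
The error term is at most $O(R^2/T^2)$, as the statement requires.

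Finally, take the maximum over $c$. The map $c\mapsto \operatorname{softmax}(\ell/T)_c$ is monotone in $\ell_c$, so the argmax of the softmax coincides with $\arg\max_c\ell_c$. Alternatively, the maximum of (linear term $+$ uniform $O(\cdot)$ error) equals the maximum of the linear term plus an error of the same order. Either way we get $\max_c z_c=\max_c\ell_c-\operatorname{mean}_c\ell_c$, which yields the stated formula.

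The main point needing care is the uniformity of the remainder. The $O(R^2/T^2)$ bound must hold simultaneously for every $c$ and must survive division by the denominator. This is why we first centre the logits: centring kills the first-order term in the denominator and bounds every $|\epsilon_c|$ by $R/T$. Without centring, the expansion would carry $\max_c|\ell_c|/T$ rather than the range. We will also state explicitly that the implied constant is absolute; with $|\epsilon|\le 1$, using $|e^\epsilon-1-\epsilon|\le \epsilon^2$ suffices.
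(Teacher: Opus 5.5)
Your proof is correct and follows the route the paper indicates. Cancelling $e^{\gamma(x)/T}$ gives exact invariance, and centring $\ell_c=\bar\ell+\delta_c$ with $\sum_c\delta_c=0$ (the decomposition the paper uses for its Energy expansion) removes the first-order term from the normaliser, so a uniform first-order expansion of $e^{\delta_c/T}$, carried through the maximum, yields the formula. The paper never writes this argument out. Your explicit remainder $O\bigl(R^2/(KT^2)\bigr)$ is sharper than the stated $O(R^2/T^2)$, and this is worth keeping because the first-order term itself is only $O\bigl(R/(KT)\bigr)$.
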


Proposition~\ref{prop:shift} identifies MCM as a
shift-invariant approximation to the centred maximum,
thereby preserving only the sharpness channel. The
complementary decomposition follows from the
log-sum-exp identity. Writing
$\ell_c=\bar{\ell}+\delta_c$ with
$\sum_c\delta_c=0$,

\[
\operatorname{Energy}(\ell)
=
\log\sum_c e^{\ell_c}
=
\log K+\bar{\ell}
+\frac{\operatorname{Var}(\ell)}{2}
+O(\|\delta\|^3),
\]
thus, Energy preserves the absolute level to leading order,
with only a second-order dependence on logit variance.
Empirically, both approximations are essentially exact in
our setting (Appendix~E.3): MCM differs from the centred
maximum by less than $0.2$ \fpr{} across all core
benchmarks, while Energy closely matches the mean level.
Together, MCM and Energy form two complementary
extremes of the same logit decomposition: one preserves
sharpness, whereas the other preserves level.

This decomposition makes the portability failures
interpretable. ID domains separated primarily by sharpness favour
MCM-like scores, whereas deployments
separated primarily by level favour Energy- or
MaxLogit-like scores. Because the informative component 
changes with the ID domain, neither family can
retain a universal ranking.

This immediately explains the behaviour observed in
Figure~\ref{fig:channels}. On ImageNet, a $1{,}000$-anchor
vocabulary gives ID images one sharp winner while far-OOD
matches nothing sharply. On EuroSAT and Quickdraw,
however, ID images themselves match no anchor sharply
(satellite tiles and line drawings are far from CLIP's
photographic text concepts), so the sharpness channel
contains little ID/OOD information and MCM approaches
chance performance. Conversely, on CUB-200 the
\emph{level} almost perfectly separates ID from OOD, yet
shift-invariant scores discard that signal, giving MCM
$23$ versus MaxLogit $2$.

\paragraph{Vocabulary size confounds the score.} Increasing the number of
anchors $K$ changes both maxima and normalisers, so part of ImageNet's apparent
ranking is the vocabulary rather than the images. In controlled class subsets
(Figure~\ref{fig:klabels}a), MaxLogit pays a growing penalty to MCM. Growing
the label set alone can therefore change detector behaviour.

\begin{figure}[t]
\centering
\includegraphics[width=0.97\columnwidth]{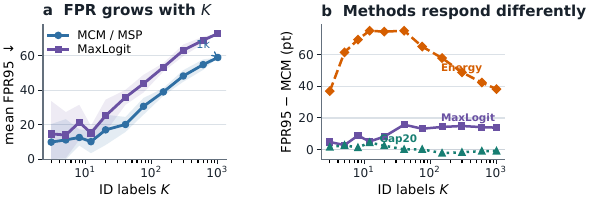}
\caption{\textbf{Label-set construction changes the detector.}
(a) Mean \fpr{} on a fixed suite under ImageNet class subsampling (eight draws;
bands $\pm1$ s.d.). (b) Other global-logit scores minus MCM; positive favours
MCM, and MSP${=}$MCM under this scoring convention.}
\label{fig:klabels}
\end{figure}

\paragraph{Each detector commits to different evidence.}
The previous analysis explains the behaviour of the
corpus-free detectors. MSP and MCM preserve sharpness
because softmax normalisation discards the level;
MaxLogit, Energy, and Mahalanobis preserve the level (or
its feature-space analogue); and GL-MCM extends
sharpness with local evidence. The portability audit
therefore becomes interpretable: level-separable domains
(CUB, Pets, Food, EuroSAT, Quickdraw, and MNIST)
reward MaxLogit and Mahalanobis, whereas
sharpness-separable domains (ImageNet-1K/200 and
Caltech) reward MCM and GL-MCM. Domains where
neither channel is reliable (Painting and Infograph) leave
every corpus-free detector at or above $49$ \fpr{}.

This leaves one remaining family of detectors. NegLabel
and CSP do not rely solely on the ID vocabulary. Instead,
they augment it with a WordNet-derived semantic corpus,
introducing a third source of evidence that depends on the
coverage of that external vocabulary rather than only on the
image representations themselves.

\begin{figure}[t]
\centering
\includegraphics[width=\columnwidth]{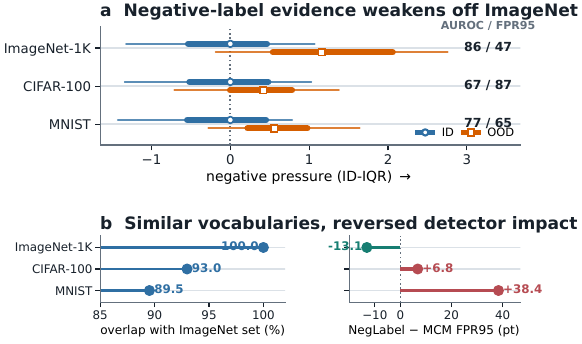}
\caption{\textbf{External semantic evidence depends on corpus coverage.}
(a) Negative-label evidence separates ImageNet but weakens
off ImageNet. (b) NegMining selects nearly the same
WordNet vocabulary across deployments, yet detector
performance reverses. Corpus coverage, not vocabulary
selection, limits transfer.}
\label{fig:neglabel}
\end{figure}

\subsection{External Semantic Evidence Depends on Corpus Coverage}
\label{sec:neg}

Unlike the corpus-free detectors, NegLabel and CSP augment
the ID vocabulary with WordNet-derived concepts. Their
success therefore depends not only on the image evidence,
but also on whether this external semantic corpus brackets
the ID–OOD shift. Figure~\ref{fig:neglabel} asks two
related questions: \textit{does the mined vocabulary provide useful
OOD evidence, and does NegMining adapt that vocabulary
when the ID domain changes?}

Panel~(a) shows that the external semantic evidence
separates ID and OOD on ImageNet but rapidly loses
discriminative power on CIFAR-100 and MNIST. Panel~(b)
explains this reversal. Although theID domain changes substantially, 
NegMining selects almost the same
WordNet vocabulary, with $89.5$ to $93.0\%$ overlap with
the ImageNet negatives. Nevertheless, NegLabel changes
from improving MCM by $13.1$ FPR95 points on ImageNet
to degrading it by $6.8$ and $38.4$ points on CIFAR-100
and MNIST, respectively. The limitation therefore lies not
in the mining procedure, but in the coverage of the external
semantic corpus. WordNet brackets ImageNet-style shifts
well, yet provides much weaker evidence for other
deployment domains. CSP relies on the same WordNet
corpus and therefore inherits the same dependence on
corpus coverage.

% \begin{insightbox}
% \textcolor{calloutblue}{\textbf{Mechanism takeaway.}}
% Sharpness, absolute level, and external-corpus coverage fail on different
% shifts. Normalising away one cue can remove exactly the evidence a domain needs.
% \end{insightbox}

\begin{insightbox}
\textcolor{calloutblue}{}
The audit identifies three complementary evidence
channels. Because deployment domains preserve different
channels, no fixed detector transfers reliably. This
diagnosis naturally motivates CEG, which preserves
complementary evidence rather than relying on a single
score.
\end{insightbox}

% AUTO-GENERATED by scripts/104_make_ceg_paper_tables.py
\begin{table*}[t]
\centering\scriptsize
\setlength{\tabcolsep}{3.0pt}
\renewcommand{\arraystretch}{0.86}
\caption{\textbf{CEG: strict results and full-domain coverage.} (a) Raw$\to$\ceg{} FPR95$\downarrow$/AUROC$\uparrow$ on strict tasks. (b) Every entry is $\Delta$FPR/$\Delta$AUC (\ceg{}$-$Raw): negative $\Delta$FPR and positive $\Delta$AUC denote improvement. Colors are metric-specific. In panel (a), each subcolumn is keyed independently and only regressions are tinted; in panel (b), the two numbers are colored independently. Green/red/gray denotes improvement/regression/$|\Delta|\le0.10$ tie. DomainNet counts once in the 13-unit mean. Aggregation sensitivity is in Appendix Table~\ref{tab:ceg-weighting-sensitivity}.}
\label{tab:ceg}
\textbf{(a) Strict five-task headline: Raw$\to$CEG}\par\vspace{1pt}
\setlength{\tabcolsep}{1.5pt}
\begin{tabular*}{\textwidth}{@{\extracolsep{\fill}}l*{12}{r}}
\toprule
& \multicolumn{2}{c}{IN-1K} & \multicolumn{2}{c}{IN-200} & \multicolumn{2}{c}{C-10} & \multicolumn{2}{c}{C-100} & \multicolumn{2}{c}{MNIST} & \multicolumn{2}{c}{Family}\\
\cmidrule(lr){2-3}\cmidrule(lr){4-5}\cmidrule(lr){6-7}\cmidrule(lr){8-9}\cmidrule(lr){10-11}\cmidrule(lr){12-13}
Base detector & FPR$\downarrow$ & AUC$\uparrow$ & FPR$\downarrow$ & AUC$\uparrow$ & FPR$\downarrow$ & AUC$\uparrow$ & FPR$\downarrow$ & AUC$\uparrow$ & FPR$\downarrow$ & AUC$\uparrow$ & FPR$\downarrow$ & AUC$\uparrow$\\
\midrule
\rowcolor{clipblue!7}\multicolumn{13}{l}{\textbf{Class-name scores; no external corpus}}\\
MaxLogit & \goodnum{72.7$\!\to\!$54.3} & \goodnum{77.3$\!\to\!$82.3} & \goodnum{45.7$\!\to\!$30.2} & \goodnum{89.3$\!\to\!$92.2} & \goodnum{14.5$\!\to\!$11.7} & \goodnum{96.2$\!\to\!$97.2} & \goodnum{61.5$\!\to\!$52.6} & \goodnum{85.7$\!\to\!$87.5} & \goodnum{23.5$\!\to\!$15.8} & \goodnum{95.1$\!\to\!$97.1} & \goodnum{40.2$\!\to\!$30.1} & \goodnum{89.8$\!\to\!$92.2}\\
\addlinespace[0.35pt]
Energy ($\tau{=}1$) & \goodnum{97.0$\!\to\!$56.9} & \goodnum{42.7$\!\to\!$79.3} & \goodnum{95.6$\!\to\!$33.0} & \goodnum{46.6$\!\to\!$90.8} & \goodnum{68.8$\!\to\!$13.4} & \goodnum{70.5$\!\to\!$96.8} & \goodnum{77.9$\!\to\!$57.2} & \goodnum{64.4$\!\to\!$86.6} & \goodnum{34.4$\!\to\!$16.0} & \goodnum{90.3$\!\to\!$97.0} & \goodnum{68.0$\!\to\!$32.1} & \goodnum{67.5$\!\to\!$91.3}\\
\addlinespace[0.35pt]
MCM & \goodnum{60.0$\!\to\!$53.9} & \goodnum{81.9$\!\to\!$82.9} & \goodnum{32.2$\!\to\!$28.9} & \goodnum{91.7$\!\to\!$92.5} & \goodnum{19.4$\!\to\!$11.5} & \goodnum{94.8$\!\to\!$97.2} & \goodnum{82.4$\!\to\!$56.8} & \goodnum{76.6$\!\to\!$85.9} & \goodnum{30.8$\!\to\!$16.0} & \goodnum{93.5$\!\to\!$97.0} & \goodnum{42.6$\!\to\!$30.5} & \goodnum{88.7$\!\to\!$92.1}\\
\addlinespace[0.35pt]
GL-MCM$^\dagger$ & \goodnum{53.1$\!\to\!$52.7} & \badcell{83.2$\!\to\!$82.9} & \goodnum{28.7$\!\to\!$27.3} & \goodnum{92.0$\!\to\!$92.5} & \goodnum{16.4$\!\to\!$11.1} & \goodnum{95.4$\!\to\!$97.1} & \goodnum{67.8$\!\to\!$53.6} & \goodnum{80.0$\!\to\!$86.4} & \goodnum{31.2$\!\to\!$14.0} & \goodnum{93.0$\!\to\!$97.4} & \goodnum{38.1$\!\to\!$28.8} & \goodnum{89.4$\!\to\!$92.3}\\
\addlinespace[0.35pt]
20\%-nonmax gap (abl.) & \goodnum{59.1$\!\to\!$53.2} & \goodnum{82.7$\!\to\!$83.2} & \goodnum{29.2$\!\to\!$27.3} & \goodnum{92.4$\!\to\!$92.8} & \goodnum{34.6$\!\to\!$12.3} & \goodnum{92.8$\!\to\!$97.0} & \goodnum{88.0$\!\to\!$57.9} & \goodnum{76.3$\!\to\!$85.9} & \goodnum{76.7$\!\to\!$19.9} & \goodnum{78.8$\!\to\!$95.8} & \goodnum{60.7$\!\to\!$31.8} & \goodnum{83.7$\!\to\!$91.7}\\
\addlinespace[0.35pt]
LogitGap & \goodnum{59.1$\!\to\!$53.2} & \goodnum{82.7$\!\to\!$83.2} & \goodnum{29.2$\!\to\!$27.2} & \goodnum{92.4$\!\to\!$92.8} & \goodnum{26.5$\!\to\!$11.8} & \goodnum{93.9$\!\to\!$97.1} & \goodnum{88.0$\!\to\!$57.9} & \goodnum{76.3$\!\to\!$85.9} & \goodnum{50.7$\!\to\!$18.9} & \goodnum{87.5$\!\to\!$96.2} & \goodnum{50.7$\!\to\!$31.3} & \goodnum{86.7$\!\to\!$91.9}\\
\addlinespace[0.35pt]
\midrule
\rowcolor{pegreen!7}\multicolumn{13}{l}{\textbf{Unlabeled-ID feature fit}}\\
Mahalanobis & \goodnum{96.6$\!\to\!$57.3} & \goodnum{62.1$\!\to\!$80.5} & \goodnum{86.9$\!\to\!$32.9} & \goodnum{82.0$\!\to\!$91.8} & \goodnum{18.1$\!\to\!$8.2} & \goodnum{95.9$\!\to\!$98.0} & \goodnum{37.7$\!\to\!$31.3} & \goodnum{90.9$\!\to\!$93.4} & \badcell{0.2$\!\to\!$0.4} & \neutralnum{99.9$\!\to\!$99.8} & \goodnum{40.0$\!\to\!$21.7} & \goodnum{88.5$\!\to\!$93.9}\\
\addlinespace[0.35pt]
\midrule
\rowcolor{sigorange!8}\multicolumn{13}{l}{\textbf{External WordNet corpus/mining}}\\
NegLabel & \goodnum{46.2$\!\to\!$46.0} & \badcell{85.9$\!\to\!$85.0} & \goodnum{29.3$\!\to\!$26.1} & \goodnum{93.0$\!\to\!$93.2} & \goodnum{36.1$\!\to\!$11.9} & \goodnum{91.9$\!\to\!$97.1} & \goodnum{88.7$\!\to\!$59.4} & \goodnum{66.7$\!\to\!$85.0} & \goodnum{69.0$\!\to\!$15.8} & \goodnum{76.4$\!\to\!$96.9} & \goodnum{56.4$\!\to\!$29.2} & \goodnum{81.7$\!\to\!$92.3}\\
\addlinespace[0.35pt]
CSP & \badcell{41.5$\!\to\!$42.3} & \badcell{87.4$\!\to\!$86.2} & \goodnum{25.2$\!\to\!$24.7} & \badcell{93.9$\!\to\!$93.7} & \goodnum{30.2$\!\to\!$11.7} & \goodnum{94.2$\!\to\!$97.1} & \goodnum{81.1$\!\to\!$54.9} & \goodnum{74.7$\!\to\!$86.0} & \goodnum{56.7$\!\to\!$15.4} & \goodnum{84.6$\!\to\!$97.0} & \goodnum{48.6$\!\to\!$27.4} & \goodnum{86.6$\!\to\!$92.8}\\
\addlinespace[0.35pt]
\bottomrule
\end{tabular*}
\vspace{2pt}\par\textbf{(b) Every pitfall-audit unit: $\Delta$FPR/$\Delta$AUC (CLIP-B/16)}\par
\tiny\setlength{\tabcolsep}{0.7pt}\renewcommand{\arraystretch}{0.92}
\begin{tabular*}{\textwidth}{@{\extracolsep{\fill}}lrrrrrrrrrrrrrr}
\toprule
Base & IN-1K & C-10 & C-100 & MNIST & IN-200 & CUB & Food & Pets & Cal & Euro & DTD & VOC & DN & 13-unit\\
\midrule
MSP & \textcolor{goodgreen}{\bfseries -12.4}\hspace{0.15pt}/\hspace{0.15pt}\textcolor{goodgreen}{\bfseries +4.2} & \textcolor{goodgreen}{\bfseries -14.3}\hspace{0.15pt}/\hspace{0.15pt}\textcolor{goodgreen}{\bfseries +3.3} & \textcolor{goodgreen}{\bfseries -29.2}\hspace{0.15pt}/\hspace{0.15pt}\textcolor{goodgreen}{\bfseries +10.6} & \textcolor{goodgreen}{\bfseries -21.1}\hspace{0.15pt}/\hspace{0.15pt}\textcolor{goodgreen}{\bfseries +6.5} & \textcolor{goodgreen}{\bfseries -7.7}\hspace{0.15pt}/\hspace{0.15pt}\textcolor{goodgreen}{\bfseries +1.9} & \textcolor{goodgreen}{\bfseries -74.7}\hspace{0.15pt}/\hspace{0.15pt}\textcolor{goodgreen}{\bfseries +30.1} & \textcolor{goodgreen}{\bfseries -15.8}\hspace{0.15pt}/\hspace{0.15pt}\textcolor{goodgreen}{\bfseries +3.2} & \textcolor{goodgreen}{\bfseries -37.2}\hspace{0.15pt}/\hspace{0.15pt}\textcolor{goodgreen}{\bfseries +8.4} & \textcolor{goodgreen}{\bfseries -5.4}\hspace{0.15pt}/\hspace{0.15pt}\textcolor{goodgreen}{\bfseries +2.1} & \textcolor{goodgreen}{\bfseries -34.1}\hspace{0.15pt}/\hspace{0.15pt}\textcolor{goodgreen}{\bfseries +36.3} & \textcolor{goodgreen}{\bfseries -25.1}\hspace{0.15pt}/\hspace{0.15pt}\textcolor{goodgreen}{\bfseries +16.1} & \textcolor{goodgreen}{\bfseries -6.4}\hspace{0.15pt}/\hspace{0.15pt}\textcolor{goodgreen}{\bfseries +4.7} & \textcolor{goodgreen}{\bfseries -5.0}\hspace{0.15pt}/\hspace{0.15pt}\textcolor{goodgreen}{\bfseries +5.3} & \textcolor{goodgreen}{\bfseries -22.2}\hspace{0.15pt}/\hspace{0.15pt}\textcolor{goodgreen}{\bfseries +10.2}\\
\addlinespace[0.35pt]
MaxLogit & \textcolor{goodgreen}{\bfseries -21.1}\hspace{0.15pt}/\hspace{0.15pt}\textcolor{goodgreen}{\bfseries +5.4} & \textcolor{goodgreen}{\bfseries -3.4}\hspace{0.15pt}/\hspace{0.15pt}\textcolor{goodgreen}{\bfseries +0.9} & \textcolor{goodgreen}{\bfseries -8.9}\hspace{0.15pt}/\hspace{0.15pt}\textcolor{goodgreen}{\bfseries +1.8} & \textcolor{goodgreen}{\bfseries -8.0}\hspace{0.15pt}/\hspace{0.15pt}\textcolor{goodgreen}{\bfseries +1.9} & \textcolor{goodgreen}{\bfseries -14.4}\hspace{0.15pt}/\hspace{0.15pt}\textcolor{goodgreen}{\bfseries +2.5} & \textcolor{badred}{\bfseries +0.3}\hspace{0.15pt}/\hspace{0.15pt}\textcolor{badred}{\bfseries -0.1} & \textcolor{goodgreen}{\bfseries -0.5}\hspace{0.15pt}/\hspace{0.15pt}\textcolor{black!60}{\bfseries -0.1} & \textcolor{black!60}{\bfseries 0.0}\hspace{0.15pt}/\hspace{0.15pt}\textcolor{badred}{\bfseries -0.1} & \textcolor{goodgreen}{\bfseries -16.6}\hspace{0.15pt}/\hspace{0.15pt}\textcolor{goodgreen}{\bfseries +4.4} & \textcolor{badred}{\bfseries +0.6}\hspace{0.15pt}/\hspace{0.15pt}\textcolor{goodgreen}{\bfseries +1.4} & \textcolor{goodgreen}{\bfseries -6.4}\hspace{0.15pt}/\hspace{0.15pt}\textcolor{goodgreen}{\bfseries +2.7} & \textcolor{goodgreen}{\bfseries -15.3}\hspace{0.15pt}/\hspace{0.15pt}\textcolor{goodgreen}{\bfseries +5.0} & \textcolor{goodgreen}{\bfseries -3.6}\hspace{0.15pt}/\hspace{0.15pt}\textcolor{goodgreen}{\bfseries +9.3} & \textcolor{goodgreen}{\bfseries -7.5}\hspace{0.15pt}/\hspace{0.15pt}\textcolor{goodgreen}{\bfseries +2.7}\\
\addlinespace[0.35pt]
Energy & \textcolor{goodgreen}{\bfseries -39.1}\hspace{0.15pt}/\hspace{0.15pt}\textcolor{goodgreen}{\bfseries +38.2} & \textcolor{goodgreen}{\bfseries -55.5}\hspace{0.15pt}/\hspace{0.15pt}\textcolor{goodgreen}{\bfseries +26.3} & \textcolor{goodgreen}{\bfseries -21.1}\hspace{0.15pt}/\hspace{0.15pt}\textcolor{goodgreen}{\bfseries +22.4} & \textcolor{goodgreen}{\bfseries -18.5}\hspace{0.15pt}/\hspace{0.15pt}\textcolor{goodgreen}{\bfseries +6.7} & \textcolor{goodgreen}{\bfseries -60.6}\hspace{0.15pt}/\hspace{0.15pt}\textcolor{goodgreen}{\bfseries +40.5} & \textcolor{goodgreen}{\bfseries -15.7}\hspace{0.15pt}/\hspace{0.15pt}\textcolor{goodgreen}{\bfseries +4.5} & \textcolor{goodgreen}{\bfseries -47.1}\hspace{0.15pt}/\hspace{0.15pt}\textcolor{goodgreen}{\bfseries +11.9} & \textcolor{goodgreen}{\bfseries -17.7}\hspace{0.15pt}/\hspace{0.15pt}\textcolor{goodgreen}{\bfseries +4.8} & \textcolor{goodgreen}{\bfseries -61.5}\hspace{0.15pt}/\hspace{0.15pt}\textcolor{goodgreen}{\bfseries +42.1} & \textcolor{badred}{\bfseries +7.3}\hspace{0.15pt}/\hspace{0.15pt}\textcolor{badred}{\bfseries -2.5} & \textcolor{goodgreen}{\bfseries -16.7}\hspace{0.15pt}/\hspace{0.15pt}\textcolor{goodgreen}{\bfseries +8.8} & \textcolor{goodgreen}{\bfseries -47.4}\hspace{0.15pt}/\hspace{0.15pt}\textcolor{goodgreen}{\bfseries +33.4} & \textcolor{goodgreen}{\bfseries -5.0}\hspace{0.15pt}/\hspace{0.15pt}\textcolor{goodgreen}{\bfseries +25.1} & \textcolor{goodgreen}{\bfseries -30.7}\hspace{0.15pt}/\hspace{0.15pt}\textcolor{goodgreen}{\bfseries +20.2}\\
\addlinespace[0.35pt]
MCM & \textcolor{goodgreen}{\bfseries -6.9}\hspace{0.15pt}/\hspace{0.15pt}\textcolor{goodgreen}{\bfseries +1.0} & \textcolor{goodgreen}{\bfseries -8.3}\hspace{0.15pt}/\hspace{0.15pt}\textcolor{goodgreen}{\bfseries +2.4} & \textcolor{goodgreen}{\bfseries -25.4}\hspace{0.15pt}/\hspace{0.15pt}\textcolor{goodgreen}{\bfseries +9.5} & \textcolor{goodgreen}{\bfseries -16.2}\hspace{0.15pt}/\hspace{0.15pt}\textcolor{goodgreen}{\bfseries +3.6} & \textcolor{goodgreen}{\bfseries -5.3}\hspace{0.15pt}/\hspace{0.15pt}\textcolor{goodgreen}{\bfseries +1.2} & \textcolor{goodgreen}{\bfseries -22.1}\hspace{0.15pt}/\hspace{0.15pt}\textcolor{goodgreen}{\bfseries +4.3} & \textcolor{goodgreen}{\bfseries -3.7}\hspace{0.15pt}/\hspace{0.15pt}\textcolor{goodgreen}{\bfseries +0.7} & \textcolor{goodgreen}{\bfseries -7.0}\hspace{0.15pt}/\hspace{0.15pt}\textcolor{goodgreen}{\bfseries +1.4} & \textcolor{goodgreen}{\bfseries -9.5}\hspace{0.15pt}/\hspace{0.15pt}\textcolor{goodgreen}{\bfseries +2.1} & \textcolor{goodgreen}{\bfseries -34.6}\hspace{0.15pt}/\hspace{0.15pt}\textcolor{goodgreen}{\bfseries +47.1} & \textcolor{goodgreen}{\bfseries -25.7}\hspace{0.15pt}/\hspace{0.15pt}\textcolor{goodgreen}{\bfseries +13.9} & \textcolor{goodgreen}{\bfseries -3.3}\hspace{0.15pt}/\hspace{0.15pt}\textcolor{goodgreen}{\bfseries +3.1} & \textcolor{goodgreen}{\bfseries -5.1}\hspace{0.15pt}/\hspace{0.15pt}\textcolor{goodgreen}{\bfseries +4.8} & \textcolor{goodgreen}{\bfseries -13.3}\hspace{0.15pt}/\hspace{0.15pt}\textcolor{goodgreen}{\bfseries +7.3}\\
\addlinespace[0.35pt]
GL-MCM$^\dagger$ & \textcolor{goodgreen}{\bfseries -0.6}\hspace{0.15pt}/\hspace{0.15pt}\textcolor{badred}{\bfseries -0.3} & \textcolor{goodgreen}{\bfseries -5.5}\hspace{0.15pt}/\hspace{0.15pt}\textcolor{goodgreen}{\bfseries +1.7} & \textcolor{goodgreen}{\bfseries -14.5}\hspace{0.15pt}/\hspace{0.15pt}\textcolor{goodgreen}{\bfseries +6.6} & \textcolor{goodgreen}{\bfseries -17.5}\hspace{0.15pt}/\hspace{0.15pt}\textcolor{goodgreen}{\bfseries +4.4} & \textcolor{goodgreen}{\bfseries -1.7}\hspace{0.15pt}/\hspace{0.15pt}\textcolor{goodgreen}{\bfseries +0.6} & \textcolor{goodgreen}{\bfseries -45.0}\hspace{0.15pt}/\hspace{0.15pt}\textcolor{goodgreen}{\bfseries +12.7} & \textcolor{goodgreen}{\bfseries -0.8}\hspace{0.15pt}/\hspace{0.15pt}\textcolor{black!60}{\bfseries 0.0} & \textcolor{goodgreen}{\bfseries -10.3}\hspace{0.15pt}/\hspace{0.15pt}\textcolor{goodgreen}{\bfseries +2.0} & \textcolor{goodgreen}{\bfseries -6.3}\hspace{0.15pt}/\hspace{0.15pt}\textcolor{goodgreen}{\bfseries +1.1} & \textcolor{goodgreen}{\bfseries -28.9}\hspace{0.15pt}/\hspace{0.15pt}\textcolor{goodgreen}{\bfseries +18.7} & \textcolor{goodgreen}{\bfseries -11.7}\hspace{0.15pt}/\hspace{0.15pt}\textcolor{goodgreen}{\bfseries +5.1} & \textcolor{goodgreen}{\bfseries -1.0}\hspace{0.15pt}/\hspace{0.15pt}\textcolor{goodgreen}{\bfseries +0.7} & \textcolor{goodgreen}{\bfseries -3.2}\hspace{0.15pt}/\hspace{0.15pt}\textcolor{goodgreen}{\bfseries +2.8} & \textcolor{goodgreen}{\bfseries -11.3}\hspace{0.15pt}/\hspace{0.15pt}\textcolor{goodgreen}{\bfseries +4.3}\\
\addlinespace[0.35pt]
\bottomrule
\end{tabular*}
\vspace{1pt}\par\tiny $^\dagger$Native local features. Exact unit values and intervals are in Appendices~\ref{app:hard} and~\ref{app:broad}.
\end{table*}

\section{From Diagnosis to Deployment: The Complementary Evidence Guard}
\label{sec:guard}

The portability audit and its evidence-level explanation
suggest a different deployment strategy. Rather than
selecting the detector that performed best on a benchmark,
deployment should preserve whichever evidence remains
informative after the domain changes. This leads naturally
to the Complementary Evidence Guard (CEG), which
preserves complementary evidence using only unlabeled ID
calibration data.

\paragraph{Complementary channels.}
For a higher-is-ID base detector $B$ and unlabeled ID
calibration set $\mathcal C=\{x_j\}_{j=1}^{n}$, the same VLM
forward pass produces two complementary channels,

\begin{equation}
\begin{aligned}
L(x)
&=
z_{\mathrm{ID}}\!\left(\max_c\ell_c\right)
+
z_{\mathrm{ID}}\!\left(
\frac1k
\sum_{i\in\mathcal T_k(x)}
\max_c\ell_{i,c}
\right),\\
S(x)
&=
z_{\mathrm{ID}}\!\left(
\max_c\ell_c-\operatorname{mean}_c\ell_c
\right)
+
z_{\mathrm{ID}}\!\left(
\max_q(\kappa*s)_q
\right).
\end{aligned}
\label{eq:channels}
\end{equation}

Here $\mathcal T_k(x)$ contains the $k=10$ patches with the
highest relative confidence, so $L$ combines global and
local absolute level, whereas $S$ combines global and
spatial sharpness. Both channels are standardised using ID
statistics estimated from $\mathcal C$.

\paragraph{Complementary Evidence Guard.}

Each channel is mapped to its empirical ID percentile,

\[
U_R(x)=\widehat F_R(R(x)),
\]

where $\widehat F_R$ is the empirical cumulative
distribution function of channel $R$ estimated from
$\mathcal C$. The guard is then

\begin{equation}
G_B(x)
=
\ceg(B)(x)
=
\min_{R\in\{B,L,S\}}
U_R(x).
\label{eq:ceg}
\end{equation}

A sample is accepted whenever $G_B(x)\ge\tau$, where
$\tau$ is fixed using a disjoint ID operating split
(Appendix~\ref{app:protocol}). Taking the minimum directly
implements the deployment principle motivated by the
audit: complementary evidence is non-compensatory, so a
single atypical channel is sufficient to veto acceptance.

The percentile transform is closely related to conformal
ranking
\citep{haroush2022statistical,bates2023conformal,kaur2022idecode}.
Our contribution is not the ranking itself, but identifying
the complementary evidence channels whose preservation
improves deployment robustness. Because the channels are
dependent and share the same calibration data, we make no
finite-sample coverage guarantee.

\paragraph{Evaluation protocol.}
The primary evaluation uses five strict OpenOOD tasks with
fixed unlabeled ID calibration data and held-out test sets.
CEG is developed only on capped versions of these tasks,
whereas the reported results use the full uncapped
benchmarks. Transfer to SigLIP2 and PE-Core freezes all
hyperparameters, and the protected operating point is
evaluated using five-fold ID-only cross-fitting. Neither
configuration uses OOD images or OOD labels during
fitting. Complete protocols are given in
Appendix~\ref{app:protocol}.

\paragraph{Deployment robustness.}

If the diagnosis is correct, preserving both channels should not only
lower error but make the outcome less dependent on which detector was
picked in the first place. That is the claim we test here, and it
holds: \ceg{} improves the family-balanced FPR95 of every wrapped
detector, by between $9.3$ and $35.9$ points, and family AUROC rises
for every base as well (Table~\ref{tab:ceg}), so the effect is not
simply a threshold shift at $95\%$ ID recall. The more interesting
change, however, is not that each detector improves but that they stop
disagreeing. Among the five corpus-free bases, the spread between best
and worst contracts from $29.9$, $22.0$ and $43.3$ FPR95 points down
to $3.3$, $2.3$ and $3.9$, under family-balanced, unweighted $13$-unit
and MNIST-excluded weighting respectively
(Appendix Table~\ref{tab:ceg-weighting-sensitivity}). Which guarded detector comes out
ahead still shifts between these schemes, much as the audit would
predict, but the contraction itself survives all three. 
This is what we mean by reducing sensitivity to the
initial choice: the guard does not crown a winner, it
makes the choice matter less. Table~\ref{tab:ceg-weighting-sensitivity}
therefore confirms the prediction of the portability
audit: the largest corrections occur for detectors that
previously discarded one complementary evidence channel.
Accordingly, the strongest gains are observed for the
one-cue detectors such as Energy and the gap-based
scores, whereas Mahalanobis+CEG reaches the lowest
overall error after its additional Gaussian fit and
GL-MCM+CEG remains the strongest corpus-free endpoint.

A stricter benchmark could still be flattering the guard, so we ask
whether the same interpretation holds across the broader audit, where
61 of the 65 CLIP detector-domain pairs improve. The remaining four
cases are structured rather than scattered: they arise when one noisy
channel vetoes an otherwise strong detector, which is precisely the
trade-off imposed by a non-compensatory rule. The channel ablation
supports the same interpretation (Appendix
Table~\ref{tab:ceg-ablation}): level alone reaches $33.3$ and
sharpness alone $36.4$ family-balanced FPR95, compared with $30.6$
when both are preserved, while removing the local term costs a further
$5.8$ points. Simes and Fisher achieve marginally lower error on these
development tasks, but we retain the minimum because it is the only
rule that directly implements the non-compensatory deployment
principle motivated by the audit.

\paragraph{Does the evidence hypothesis explain the results?}

Figure~\ref{fig:cegarch} shows that the observed improvements
follow the complementary evidence hypothesis. Panel (a)
shows that CEG consistently produces the largest corrections
for detectors that rely primarily on a single evidence
channel, while the protected variant (CEG-P) retains most of
those gains after reducing the strict veto behaviour. Rather
than replacing existing detectors, the guard consistently
improves them by recovering complementary evidence.

Panel (b) explains where these corrections originate.
Among raw MCM errors rescued by CEG, the level channel
provides 81/78/71\% of the vetoes for CLIP/SigLIP2/PE,
whereas spatial sharpness accounts for the remaining
19/22/29\%. This is exactly what Proposition~1 predicts:
because MCM removes absolute level, the dominant
recovered evidence should be level. The remaining
sharpness corrections show that local evidence contributes
independently rather than merely duplicating the global
signal. Together, these results close the loop between the
mechanism and the deployment audit: different detectors
discard different evidence channels, and preserving those
complementary channels improves deployment robustness.

\begin{figure}[t]
\centering
\includegraphics[width=0.97\columnwidth]{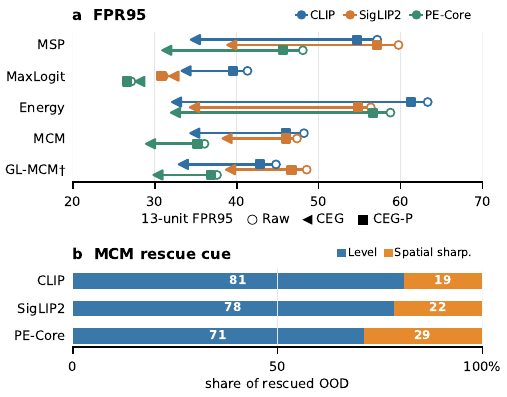}
\caption{\textbf{CEG recovers complementary evidence.}
(a) 13-unit FPR95 across five base detectors and
three VLMs. (b) Most rescued MCM errors originate from
the missing level cue rather than spatial sharpness.}
\label{fig:cegarch}
\end{figure}

% \begin{figure*}[t]
% \centering
% \includegraphics[width=0.99\textwidth]{figures/fig48_ceg_multibase_broad.pdf}
% \caption{\textbf{Compact result and separation view.}
% (a) Raw, \ceg{}, and \cegp{} full-suite FPR95 for all six cached bases and three VLMs.
% (b) Among raw-MCM errors rescued by \ceg{}, level supplies
% $81/78/71\%$ of the vetoes for CLIP/SigLIP2/PE and spatial sharpness the rest;
% the base cannot veto under this conditioning.
% (c) \cegp{} score densities for ID, near OOD, and far OOD, stacked by VLM;
% higher percentiles are more ID-like. Densities average sources within task and
% then 13 display units equally. Panels (b,c) use disclosed seed 0 as an
% interpretation diagnostic; panel (a) averages five ID-only seeds.}
% \label{fig:cegarch}
% \end{figure*}

%%%%% new

\begin{figure}[t]
\centering
\includegraphics[width=\columnwidth]{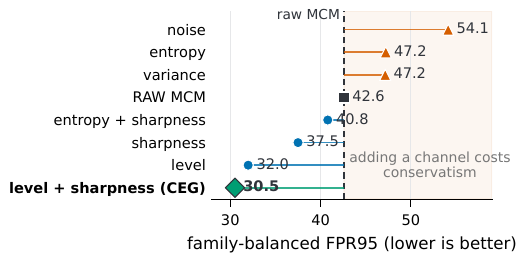}
%\vspace{-2em}
\caption{\textbf{Uninformative channels cost the minimum.}
CLIP-B/16 MCM family-balanced \fpr{} on five strict tasks and fixed ID-only
manifests. Noise is seeded random; entropy and variance independently round to
$47.2$. Shading is worse than raw.}
\label{fig:ladder}
\end{figure}

\paragraph{Channel-substitution controls.}
Adding any channel to a minimum increases conservatism, so we replace level
and sharpness with entropy, logit variance, and seeded noise under identical
calibration and evaluation (Figure~\ref{fig:ladder}). Entropy, variance, and
noise all perform no better than raw MCM ($47.2$, $47.2$, and $54.1$ versus
$42.6$ \fpr{}), whereas sharpness, level, and both reduce \fpr{} to $37.5$,
$32.0$, and $30.5$. The gain therefore follows the information contributed by
the channel, not the conservatism of the minimum. Entropy is largely redundant
with the sharpness evidence already present in MCM, while replacing the
best-anchor level signal with mean affinity improves only to $35.6$, showing
that the useful level evidence is carried by the best-anchor match rather than
average affinity.

\paragraph{Broad deployment audit.}
We next ask whether the same principle holds across deployment
settings. Evaluating \cegp{}, a shrunk variant that retains two
thirds of the base score, on five base detectors and three VLMs,
far OOD improves in all 15 near/far deployment summaries and
both near and far improve simultaneously in 14; every interval
excludes zero. The sole exception is PE MaxLogit on near OOD
($30.77\rightarrow30.92$), which still improves on the
corresponding far benchmark ($24.83\rightarrow23.85$). A
symmetric fold-wise control that scores ID and OOD under
identical protocols preserves all fifteen suite-level gains
within $0.25$ FPR95 points, so the improvements are not an
artefact of the evaluation procedure. Full near/far tables,
deployment-level regressions, and the near/far asymmetry
analysis are in Appendix~\ref{app:broad}.

\section{Conclusion}
\label{sec:conclusion}

We revisited a common assumption in zero-shot OOD detection:
that the detector with the best benchmark performance is also
the best deployment choice. A portability audit across
ID domains and VLMs showed that this assumption
does not hold. Detector rankings reverse because different 
ID domains preserve different evidence, with corpus-free
methods relying primarily on level or sharpness and
WordNet-based methods additionally depending on external
semantic coverage. This diagnosis naturally motivates CEG,
which preserves complementary evidence rather than
committing to a single score, reducing deployment
variability and tail risk without requiring OOD supervision,
auxiliary corpora, or detector-specific adaptation. More
broadly, our results suggest that evaluating zero-shot OOD
detectors should move beyond benchmark averages toward
deployment portability, explicit failure analysis, and
revalidation whenever the ID domain or the underlying VLM
changes.
\FloatBarrier

\bibliography{references,references_v2}

\counterwithin{figure}{section}
\counterwithin{table}{section}
\renewcommand\thefigure{\thesection\arabic{figure}}
\renewcommand\thetable{\thesection\arabic{table}}
\clearpage
\appendix
\makeatletter
\setlength{\@dblfptop}{0pt}
\setlength{\@dblfpsep}{12pt}
\setlength{\@dblfpbot}{0pt plus 1fil}
\makeatother

\section{Appendix Overview and Organization}
\label{app:overview}

This appendix provides the complete numerical evidence and
implementation details behind the main paper. It keeps the
failure audit, the benchmark-composition analysis, the strict
\ceg{} evaluation, and the later protected audit separate so
that their resource assumptions and evidence status remain
explicit. The material is organized as follows:

\begin{itemize}

\item \textbf{Section~\ref{app:protocol}: Detailed reproducibility protocol.}
Frozen components, score and token definitions, datasets,
calibration splits, resource accounting, uncertainty,
aggregation, overlap policy, and fidelity checks. It also
distinguishes stability, prospective, and development
evidence.

\item \textbf{Section~\ref{app:benchmark-choice}: Benchmark composition analysis.}
Additional evidence motivating the use of OpenOOD-v1.5,
including benchmark-overlap examples, nearest-neighbour
analysis, and the comparison with the traditional ImageNet
evaluation protocol.

\item \textbf{Section~\ref{app:audit}: Full failure audit.}
Complete seventeen-domain CLIP results and
Table~\ref{tab:audit}, including the excluded-overlap row
and the mean/worst-domain summaries underlying
Figure~\ref{fig:multibackbone}.

\item \textbf{Section~\ref{app:hard}: Strict and prospective \ceg{} evidence.}
Suite uncertainty, the non-compensatory property of
\ceg{}, channel and combination ablations,
architecture-transfer analysis, the disclosed
nine-checkpoint robustness extension, and the pre-frozen
ImageNet-10/ImageNet-20 confirmation.

\item \textbf{Section~\ref{app:broad}: Protected all-domain audit.}
Near/far accounting, the symmetric cross-fit control,
regression locations, complete Raw/CEG/CEG-P domain
tables, and unaggregated ID$\times$OOD-source results.

\end{itemize}
\section{Detailed Reproducibility Protocol}
\label{app:protocol}

\paragraph{Frozen components and splits.}
All VLMs and score implementations are frozen. The strict \ceg{} study uses
five fixed manifests per task: $1{,}000$ unlabeled train-ID images fit channel
standardisers and empirical CDFs, a disjoint $1{,}000$ train-ID images set the
deployment operating threshold, and official test images remain evaluation
only. The CLIP-to-SigLIP2/PE port freezes tasks, prompts, channel forms,
hyperparameters, and full-image evaluation before reading transfer OOD
results. No experiment fits target-OOD images.

\paragraph{Evidence tiers.}
The channel forms and $k{=}10$, $Q_{0.9}$, $3{\times}3$ kernel, and minimum
fusion were developed on capped versions of the same five strict task domains;
their uncapped rerun is sampling-level stability evidence, not new-task
confirmation. The multi-VLM port and reciprocal ImageNet-10/20 experiment are
prospective. \cegp{} was defined after observing transfer failures and its
18-dataset cross-fit is explicitly post-result development evidence.

\paragraph{Resource and uncertainty accounting.}
Corpus-free global scores use only ID class names; GL-MCM additionally uses
native local tokens, Mahalanobis fits extra unlabeled-ID feature statistics,
and NegLabel/CSP use WordNet. DomainNet counts once in aggregates. Reported
intervals use $5{,}000$ paired image/calibration resamples; broad cellwise
intervals are nominal and unadjusted. The details below give the exact score
equations, datasets, folds, aggregation, overlap policy, and fidelity checks.

\begin{table*}[!t]
\centering
\caption{\textbf{Seventeen-domain CLIP audit.} \fpr{}$\downarrow$
(all-OOD mean, $\%$). Green cells mark the row winner; the header bands expose
resource assumptions. Every detector exceeds $80$ somewhere.
DTD\textsuperscript{\textdagger} is excluded from Mean/Worst because its OOD
suite contains DTD-derived textures.}
\label{tab:audit}
\footnotesize
\renewcommand{\arraystretch}{0.91}
\setlength{\tabcolsep}{3.5pt}
\begin{tabular*}{\textwidth}{@{\extracolsep{\fill}}llrrrrrrr@{}}
\toprule
& & \multicolumn{4}{c}{\cellcolor{clipblue!8}\textbf{Global class-name scores}}
& \cellcolor{black!6}\textbf{Native local}
& \cellcolor{pegreen!10}\textbf{ID fit}
& \cellcolor{sigorange!10}\textbf{WordNet}\\
Family & ID dataset & MSP & MaxLogit & Energy & MCM & GL-MCM & Mahalanobis & NegLabel\\
\midrule
\rowcolor{black!2}ImageNet & ImageNet-1K & 68 & 75 & 97 & 61 & 51 & 96 & \goodcell{47}\\
\rowcolor{black!2}         & ImageNet-200 & 36 & 47 & 95 & 34 & \goodcell{27} & 84 & 32\\
\midrule
Fine-grained & CUB-200 & 77 & \goodcell{2} & 21 & 23 & 47 & \goodcell{2} & \goodcell{2}\\
             & Food-101 & 23 & 6 & 50 & 10 & 6 & \goodcell{3} & \goodcell{3}\\
             & Oxford Pets & 34 & \goodcell{0} & 22 & 8 & 11 & \goodcell{0} & \goodcell{0}\\
             & Caltech-101 & 34 & 34 & 95 & 27 & \goodcell{22} & 27 & 24\\
\midrule
\rowcolor{black!2}Far domains & EuroSAT & 98 & 61 & 22 & 98 & 88 & \goodcell{0} & 38\\
\rowcolor{black!2}            & DTD\raisebox{.45ex}{\textdagger} & 86 & 76 & 83 & 87 & 73 & \goodcell{38} & 51\\
\rowcolor{black!2}            & Pascal VOC & 57 & 58 & 93 & 52 & 38 & 27 & \goodcell{20}\\
\midrule
Renderings & Quickdraw & 97 & 40 & 16 & 97 & 96 & \goodcell{0} & 45\\
           & Clipart & 82 & 97 & 100 & 85 & 82 & \goodcell{37} & \goodcell{37}\\
           & Painting & 89 & 99 & 99 & 90 & 83 & 90 & \goodcell{50}\\
           & Sketch & 88 & 97 & 99 & 88 & 83 & 56 & \goodcell{42}\\
           & Infograph & 90 & 100 & 100 & 87 & 86 & 81 & \goodcell{49}\\
\midrule
\rowcolor{black!2}Small/coarse & CIFAR-10 & 24 & \goodcell{15} & 70 & 18 & 18 & 17 & 36\\
\rowcolor{black!2}             & CIFAR-100 & 87 & 65 & 78 & 83 & 68 & \goodcell{39} & 89\\
\rowcolor{black!2}             & MNIST & 35 & 24 & 33 & 32 & 32 & \goodcell{0} & 70\\
\midrule
\rowcolor{calloutblue!5}\multicolumn{2}{l}{\textbf{Mean (16 rows)}} & 63.7 & 51.3 & 68.1 & 55.8 & 52.4 & \goodcell{34.9} & 36.5\\
\rowcolor{calloutblue!5}\multicolumn{2}{l}{\textbf{Worst (16 rows)}} & 98 & 100 & 100 & 98 & 96 & 96 & \goodcell{89}\\
\bottomrule
\end{tabular*}
\end{table*}

\paragraph{Scores and VLM features.}
ID anchors use the dataset class names with the template
\texttt{a photo of a \{\}.}; NegLabel keeps its original
\texttt{the nice \{\}} template and published NegMining pipeline over roughly
$10$k WordNet negatives. CSP uses the published Conjugated Semantic Pool:
WordNet noun/adjective prompt ensembles are mined against the ID vocabulary,
then grouped to score positive softmax mass. Our deterministic replay sorts
corpus filenames before seeded adjective conjugation. For normalised image, patch, and text embeddings
$g,p_i,t_c$, we write $\ell_c=\langle g,t_c\rangle$ and
$\ell_{i,c}=\langle p_i,t_c\rangle$. The audit implements MSP at the native
logit scale \citep{hendrycks2017baseline}, MaxLogit
\citep{hendrycks2022scaling}, Energy \citep{liu2020energy}, MCM
\citep{ming2022mcm}, GL-MCM \citep{miyai2025glmcm}, and NegLabel
\citep{jiang2024neglabel}. Mahalanobis fits class means and a shared covariance
to zero-shot ID pseudo-labels, never ground-truth labels
\citep{lee2018mahalanobis}. CLIP local scores use projected final-block
MaskCLIP-style value features \citep{zhou2022maskclip,motd2024}. The frozen
transfer port uses SigLIP2's $32{\times}32$ post-normalisation patch sequence
before pooling and PE's projected $14{\times}14$ final sequence without its
class token. Because these native token paths are not representation-identical,
MCM is the architecture-neutral transfer base; GL-MCM is a native-token
diagnostic rather than a representation-equivalent port.

The strict scalar cache predates the multi-VLM audit and lacks the native-logit
scale MSP channel, so MCM is its softmax baseline; MSP is reported throughout
the broad audit where it was frozen. The broad matched set comprises the same
five defined scalar bases for every dataset and VLM at the audit freeze (MSP,
MaxLogit, Energy, MCM, and GL-MCM), not a result-selected subset.
Figure~\ref{fig:multibackbone} keeps the evidence tiers explicit: the
four-score global comparison was frozen before the uncapped rerun, whereas
GL-MCM is shown separately as a post-result native-token extension over the
complete cache.

For Table~\ref{tab:ceg}, Mahalanobis is fitted on $1{,}000$ unlabeled train-ID
images deterministically held out from every guard fit and operating manifest.
This prevents the empirical guard CDF from calibrating on the Gaussian's own
training scores while making the extra ID-fitting resource explicit.

\paragraph{Datasets and OOD pools.}
The CLIP mechanism audit contains ImageNet-1K/200; CUB-200, Food-101, Oxford
Pets, and Caltech-101; EuroSAT, DTD, and Pascal VOC; five non-photographic
DomainNet styles; and CIFAR-10/100 and MNIST
\citep{deng2009imagenet,wah2011cub,bossard2014food,parkhi2012pets,
feifei2004caltech,helber2019eurosat,cimpoi2014dtd,everingham2010voc,
peng2019domainnet,krizhevsky2009cifar,lecun1998mnist}. The uncapped audits also
include DomainNet Real, yielding 18 raw datasets and 13 display units after its
six styles are averaged once. The historical table uses fixed
$2{,}400$-ID/$1{,}500$-OOD draws from the five-source ImageNet OpenOOD-v1.5
pool: SSB-hard, NINCO, iNaturalist, Textures, and OpenImage-O
\citep{zhang2023openood,vaze2022ssb,bitterwolf2023ninco,
vanhorn2018inaturalist,wang2022vim}. Its DTD-as-ID row is excluded from
Mean/Worst because that pool contains DTD-derived OOD images. The uncapped
audits omit Textures from all shared ImageNet reference probes, removing this
self-overlap; CIFAR and MNIST retain their native near/far suites. Shared probes
hold OOD inputs fixed while ID changes. Beyond removing exact DTD/Textures
self-overlap, we do not filter semantic class overlap because the shared probe
sources lack a complete label mapping to each ID vocabulary. These rows permit
category overlap by design and are reference-shift stress tests, not strict
semantic-novelty benchmarks. Rendering rows likewise test category plus
rendition shift.

\paragraph{Primary guard and prospective transfer.}
The strict suite is ImageNet-1K, ImageNet-200, CIFAR-10, CIFAR-100, and MNIST.
The level/sharpness forms, $k{=}10$, $Q_{0.9}$, $3{\times}3$ kernel, and minimum
fusion were developed on capped versions of these same five task domains.
Freezing them before the uncapped rerun prevents image-level leakage but does
not make the strict headline task-level confirmatory; only the reciprocal
ImageNet-10/20 check and prospective VLM port are out-of-development tests.
For each task, five fixed manifests use $1{,}000$ unlabeled train-ID images to
fit channel standardisers/CDFs and a disjoint $1{,}000$ train-ID images for the
deployment operating threshold; all official test images are reserved for
evaluation. The reported benchmark \fpr{} convention instead sets the
$95\%$-recall threshold on evaluation ID. The primary aggregate first averages
within ImageNet and CIFAR and then weights ImageNet, CIFAR, and MNIST equally.
Before reading SigLIP2/PE OOD results, we froze the tasks, manifests, prompt,
guard hyperparameters, and full-image policy; no architecture-specific layer,
token, task, projection, or weight sweep was performed. Paired replicates share
physical-image and calibration-manifest draws across VLMs.

\paragraph{Post-result protected audit.}
The broader \cegp{} study is deliberately labelled development evidence. For
datasets without a designated calibration split, and uniformly throughout
this audit, we use five-seed, five-fold ID-only cross-fitting. In each fold a
guard is fitted on a deterministic sample of up to $1{,}000$ images from the
other four folds; if fewer are available, it uses every non-held-out ID image.
Thus every ID image receives a score from a guard that did not fit that image.
For example, VOC-13 has 960 images, so each guard fits approximately 768 and
scores the remaining approximately 192. Each OOD image receives the mean score
over the five guards, and no OOD image enters guard fitting. ID labels only
stratify fold assignment.
Thus this cross-fit can use more unique ID images than the primary calibration
and applies different ensemble constructions to ID and OOD. The same paired
physical-image and cross-fit draws are shared across scores, tasks, variants,
and VLMs; external-corpus scores are excluded from this matched comparison.
The symmetric fold-wise control in Appendix~\ref{app:broad} removes this
ID/OOD scoring asymmetry. Mahalanobis is also excluded: unlike the common
cached score channels, it
requires a backbone-specific ID-feature fit, and equivalent global-feature
caches were not retained for all three VLMs. CSP is excluded because its
external mined prompt pool is not resource-matched across the three VLMs.

\paragraph{Sanity checks.}
On the classic iNaturalist/SUN/Places/Textures ImageNet protocol
\citep{xiao2010sun,zhou2017places}, our MCM and NegLabel implementations obtain
$42.7$ and $26.0$ \fpr{}, respectively, versus published $43.9$ and $25.4$.
Zero-shot top-1 accuracy on the strict suite is $67.09$, $91.77$, $89.73$,
$66.70$, and $38.43\%$ in task order. The low MNIST accuracy is a caveat:
better rejection alone does not make the classifier deployment-ready.

\section{Why We Use OpenOOD v1.5}
\label{app:benchmark-choice}

\paragraph{The benchmark label does not guarantee sample-level separation.}
The traditional zero-shot ImageNet protocol established by MOS
\citep{huang2021mos} uses iNaturalist, SUN, Places, and Textures as four
far-OOD sources. Figure~\ref{fig:classic-overlap}a shows
selected high-similarity cross-dataset pairs. Three illustrate shared visual
domains rather than proven leakage. The Places case is qualitatively different:
\texttt{i\_igloo\_00004659.jpg} and ImageNet validation image
\texttt{ILSVRC2012\_val\_00042483.JPEG} contain the same aligned
$600{\times}480$ photograph. Their pixels differ only at encoding level
(PSNR $51.5$ dB; $72.9\%$ of RGB channel values are exactly equal), making
this a verified cross-split duplicate rather than merely a semantic neighbour.

\paragraph{A matched nearest-neighbour diagnostic.}
To test whether visual proximity extends beyond hand-picked examples, we
stratify ImageNet validation within each of its $1{,}000$ classes (seed 0): 25 images
per class form a 25k gallery and the remaining 25 form a held-out ID query
set. For each held-out ID image and 5k images from each traditional OOD source,
we compute its maximum cosine similarity to the same gallery using frozen CLIP
ViT-B/16 image embeddings. At the threshold retaining $95\%$ of held-out ID,
$93.7$--$99.3\%$ of these nominally OOD images also pass
(Figure~\ref{fig:classic-overlap}b). This one-nearest-neighbour probe is a
visual-domain diagnostic, not an estimate of duplicate prevalence and not a
claim that the datasets are impossible to distinguish. It shows that visual
proximity alone does not cleanly realize the intended far-OOD boundary.

\begin{figure*}[h]
\centering
\includegraphics[width=0.99\textwidth]{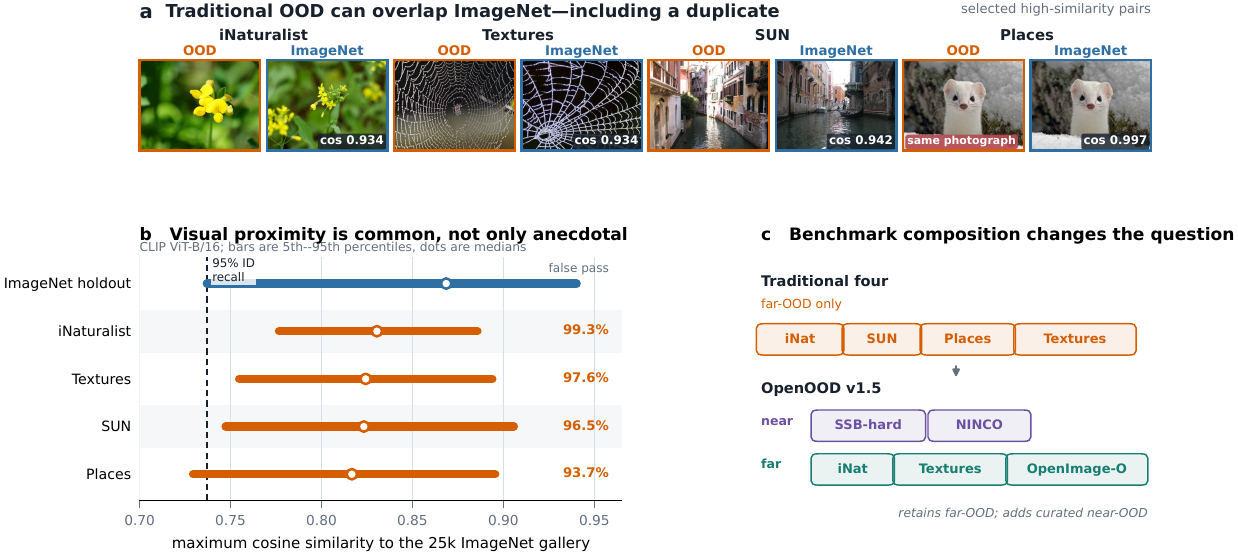}
\caption{\textbf{One label, a wide range of proximity.} (a) Pairs
from the traditional four-source ImageNet suite, ordered by
similarity: all are labelled far-OOD, yet they range from shared
visual domain to a verified re-encoded duplicate of an ImageNet
validation image. (b) Maximum cosine similarity to a matched 25k
ImageNet gallery. Bars span the 5th--95th percentiles and circles mark medians;
the dashed line is the threshold retaining $95\%$ of the held-out ImageNet
queries. Orange percentages are the OOD false-pass rates at that threshold.
(c) The traditional protocol exposes only four far-OOD sources. OpenOOD-v1.5
retains iNaturalist and Textures, adds OpenImage-O, and separately exposes the
curated near-OOD SSB-hard and NINCO sources.}
\label{fig:classic-overlap}
\end{figure*}

\subsection{Audit Protocol}
\label{sec:setup}

\paragraph{Models and detectors.}
The audit separates mechanism from portability. We therefore
study the evidence mechanism using frozen CLIP ViT-B/16
\citep{radford2021clip}, then replicate the conclusions on
SigLIP2-B/16 \citep{tschannen2025siglip2} and PE-Core-B/16
\citep{bolya2025pe} with their native preprocessing. The
comparison includes MSP, MaxLogit, Energy, Mahalanobis,
MCM, GL-MCM and NegLabel, before
later wrapping ten structurally different detectors with the
same guard. ID class names provide the text anchors, while
NegLabel retains its published WordNet pipeline.
Mahalanobis requires an additional ID-feature fit and is
therefore reported as a separate resource tier. Exact score
definitions are given in Appendix~\ref{app:protocol}.

\paragraph{Deployment settings.}
The audit combines a 17-dataset CLIP mechanism study with an
uncapped 18-dataset, three-VLM replication spanning five
deployment families. Uncapped experiments retain every
evaluation image and each dataset's native vocabulary;
CIFAR-10/100 and MNIST use their native OpenOOD suites,
while the shared ImageNet reference pool is fixed before
scoring. This design isolates detector portability across
both deployment domains and model families.

\paragraph{Evaluation criteria.}
Detector portability is measured using
\fpr{}$\downarrow$ at $95\%$ ID recall, averaged across each
suite's OOD splits. DomainNet contributes once to
cross-domain aggregates. Unless stated otherwise, all
experiments are fully zero-shot and use frozen models,
without fitting to OOD images or ground-truth OOD labels.
Complete implementation details are given in
Appendix~\ref{app:protocol}.

% \paragraph{Guard protocol.}
% The guard is evaluated under a strict separation between
% development and validation. The primary \ceg{} experiment
% uses five OpenOOD tasks with fixed unlabeled train-ID
% calibration manifests and held-out test sets. The channel
% forms, $k{=}10$, $Q_{0.9}$, the $3{\times}3$ kernel, and the
% minimum fusion were developed only on capped versions of
% these same tasks, so the uncapped evaluation is not a
% task-level confirmation. The CLIP-to-SigLIP2/PE transfer
% freezes both tasks and hyperparameters. \cegp{} serves as
% post-result validation through five-fold cross-fitting over
% 18 raw ID datasets (13 deployment units), using labels only
% for fold stratification. Confidence intervals use $5{,}000$
% paired resamples. Neither guard is fit on OOD images or
% ground-truth labels. Complete protocols appear in
% Appendix~\ref{app:protocol}.

\section{Full Seventeen-Domain Audit}
\label{app:audit}

\paragraph{How to read the audit.}
The mean and worst rows answer different deployment questions. Mahalanobis has
the lowest cross-domain mean, but its ImageNet failures leave a $96$ worst
case; NegLabel has the best worst case, yet fails badly on CIFAR-100 and MNIST.
The resource bands make this trade-off explicit: an ID-feature fit or external
corpus can move the failure rather than remove it. The within-family reversals
also rule out a simple ``photographic versus non-photographic'' explanation;
even related fine-grained datasets reward different scores.

\paragraph{Evidence boundary.}
Table~\ref{tab:audit} contains raw detector scores only; no row uses \ceg{} or
\cegp{}. DTD is shown for transparency but excluded from Mean/Worst because
its declared OOD pool contains DTD-derived textures. Section~\ref{app:hard}
next evaluates the fixed minimum guard, while Section~\ref{app:broad} keeps the
later protected development study separate.

\FloatBarrier

\section{CEG Evidence and Uncertainty}
\label{app:hard}

\subsection{Non-compensatory property of CEG}
\label{app:ceg-proof}

Equation~(\ref{eq:ceg}) defines the guard as

\[
G_B(x)
=
\min_{R\in\{B,L,S\}}
U_R(x).
\]

The minimum has the following elementary property.

\begin{proposition}
For every $\tau\in[0,1]$,

\[
\{
G_B\ge\tau
\}
=
\bigcap_{R\in\{B,L,S\}}
\{
U_R\ge\tau
\},
\]

and therefore, for any test distribution $Q$,

\[
Q(G_B\ge\tau)
\le
\min_R
Q(U_R\ge\tau).
\]

\end{proposition}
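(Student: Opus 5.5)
The plan is a pointwise set-equality argument followed by monotonicity of probability; no property of the channels, the empirical CDFs $\widehat F_R$, or the calibration set $\mathcal C$ is needed beyond the fact that each $U_R(x)$ is a well-defined real number.

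\emph{Step 1: the event identity.} Fix $\tau\in[0,1]$ and an input $x$. The set $\{B,L,S\}$ is finite, so the minimum in Equation~(\ref{eq:ceg}) is attained by some channel $R^\ast(x)$.
\begin{itemize}
\item If $G_B(x)\ge\tau$, then every channel satisfies $U_R(x)\ge\min_{R'}U_{R'}(x)=G_B(x)\ge\tau$. Hence $x$ lies in $\bigcap_R\{U_R\ge\tau\}$.
\item Conversely, if $U_R(x)\ge\tau$ for all three $R$, then in particular $G_B(x)=U_{R^\ast(x)}(x)\ge\tau$.
\end{itemize}
The two inclusions give the equality of sets. If one insists on measurability, the $U_R$ are Borel functions of $x$: each is an empirical CDF, which is a step function, composed with a measurable channel. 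So all events involved are measurable.

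\emph{Step 2: the probability bound.} For any test distribution $Q$, and for each fixed $R_0\in\{B,L,S\}$, Step~1 gives
\[
\{G_B\ge\tau\}=\bigcap_R\{U_R\ge\tau\}\subseteq\{U_{R_0}\ge\tau\}.
\]
Monotonicity of $Q$ then yields $Q(G_B\ge\tau)\le Q(U_{R_0}\ge\tau)$. Since this holds for every $R_0$, taking the minimum over $R_0$ gives the stated inequality.

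\emph{Main obstacle.} There is essentially no technical obstacle. The one point to handle carefully is the direction of the inequality at ties: we use non-strict $\ge$ throughout, matching the acceptance rule $G_B(x)\ge\tau$, so that Step~1 remains valid when several channels share the minimum. It is also worth remarking that the statement makes no coverage claim, because the channels are dependent. The bound therefore says only that the guard is at least as conservative as each channel; this is the sense of ``non-compensatory'' used in Section~\ref{sec:guard}.
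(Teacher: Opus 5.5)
Your proof is correct and matches the paper's argument: a minimum of finitely many values is at least $\tau$ exactly when each value is, and the probability bound follows by monotonicity because the intersection lies inside each single-channel event. Your measurability remark is a harmless addition that the paper leaves implicit.
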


\begin{proof}

A minimum is at least $\tau$ if and only if every argument
is at least $\tau$. The probability inequality follows
immediately because the intersection is contained in each
individual event.

\end{proof}

This proposition formalises the non-compensatory behaviour
of CEG. It explains why a single atypical evidence channel
can veto acceptance, but it does not imply improved OOD
performance after calibration. The empirical benefits of
this design are evaluated in Section~\ref{sec:guard}.

\paragraph{Image-level intuition.}
Figure~\ref{fig:imageintuition} illustrates the
non-compensatory behaviour formalised by
Proposition~2. The texture matches the ID classes strongly
in absolute level but lacks a coherent class peak, whereas
the street scene has a sharp ``truck'' winner despite an
atypically weak absolute match. Because the missing cue
changes across samples, preserving both channels allows
either one to veto acceptance when it becomes atypical.

\begin{figure}[t]
\centering
\includegraphics[width=\columnwidth]{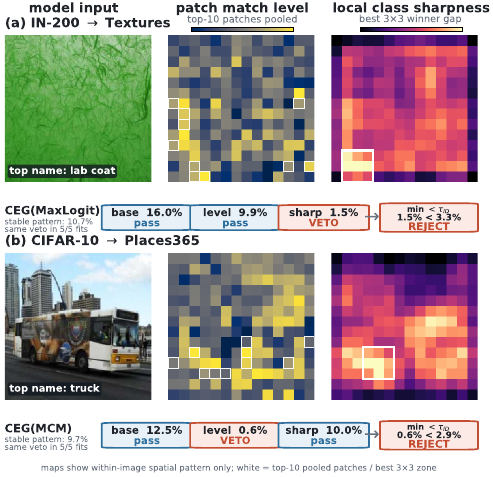}
\caption{\textbf{Image-level non-compensation in \ceg{}.}
Representative conditional-median examples show opposite
vetoes. A texture passes the base detector and level but
fails sharpness, whereas a bus passes the base detector and
sharpness but fails level. These patterns are stable across
all five frozen ID-only fits.}
\label{fig:imageintuition}
\end{figure}

% AUTO-GENERATED by scripts/104_make_ceg_paper_tables.py
\begin{center}
\begin{minipage}{\columnwidth}
\centering\scriptsize
\setlength{\tabcolsep}{4.0pt}
\captionof{table}{\textbf{Suite-level uncertainty for CEG.} $\Delta$ is CEG minus Raw family-balanced FPR95; every interval is below zero and is marked green. The intervals use 5,000 paired image/calibration resamples and summarize the whole five-task suite, not an individual domain.}
\label{tab:ceg-ci}
\begin{tabular}{lrr}
\toprule
Base & $\Delta$FPR95 & Paired 95\% interval\\
\midrule
MaxLogit & \goodcell{-10.16} & \goodcell{[-10.95, -9.56]}\\
Energy ($\tau{=}1$) & \goodcell{-35.93} & \goodcell{[-36.86, -35.08]}\\
MCM & \goodcell{-12.07} & \goodcell{[-12.99, -11.49]}\\
GL-MCM$^\dagger$ & \goodcell{-9.28} & \goodcell{[-9.92, -8.73]}\\
20\%-nonmax gap (abl.) & \goodcell{-28.93} & \goodcell{[-30.09, -27.99]}\\
LogitGap & \goodcell{-19.39} & \goodcell{[-20.28, -18.44]}\\
Mahalanobis & \goodcell{-18.22} & \goodcell{[-18.83, -17.80]}\\
NegLabel & \goodcell{-27.20} & \goodcell{[-28.23, -25.90]}\\
CSP & \goodcell{-21.17} & \goodcell{[-22.08, -20.10]}\\
\bottomrule
\end{tabular}
\end{minipage}
\end{center}

\begin{table*}[t]
\centering\small
\setlength{\tabcolsep}{5.5pt}
\renewcommand{\arraystretch}{1.02}
\begin{tabular*}{\textwidth}{@{\extracolsep{\fill}}lrrrrr}
\toprule
Base detector & Raw & \cegp{} ($\lambda{=}1/3$) & $\lambda{=}0.50$ & $\lambda{=}0.75$ & Hard \ceg{} \\
\midrule
\multicolumn{6}{l}{\textbf{Class-name scores; no external corpus}}\\
MaxLogit & 48.58 & \goodcell{45.47} & \goodcell{43.53} & \goodcell{39.76} & \goodcell{36.92} \\
Energy ($\tau{=}1$) & 84.79 & \goodcell{83.20} & \goodcell{82.03} & \goodcell{78.04} & \goodcell{39.95} \\
MCM & 48.49 & \goodcell{45.67} & \goodcell{44.25} & \goodcell{41.37} & \goodcell{37.50} \\
GL-MCM$^\dagger$ & 41.50 & \goodcell{40.09} & \goodcell{39.22} & \goodcell{37.59} & \goodcell{35.99} \\
20\%-nonmax gap$^\ddagger$ & 52.73 & \goodcell{48.75} & \goodcell{46.65} & \goodcell{42.37} & \goodcell{37.59} \\
\midrule
\multicolumn{6}{l}{\textbf{External WordNet corpus/mining}}\\
NegLabel & 50.08 & \goodcell{47.61} & \goodcell{45.95} & \goodcell{42.88} & \goodcell{35.62} \\
CSP-deterministic & 44.49 & \goodcell{41.66} & \goodcell{40.34} & \goodcell{38.15} & \goodcell{33.16} \\
\bottomrule
\end{tabular*}

\caption{\textbf{Protected CEG deployment frontier on native OpenOOD, excluding MNIST.}
All entries are mean FPR95$\downarrow$ with CLIP-B/16. OOD sources are first
averaged within each task, after which ImageNet-1K, ImageNet-200, CIFAR-10, and
CIFAR-100 receive equal weight. This is a post-result sensitivity view:
MNIST remains in the complete five-task analyses but is excluded here so that
its qualitatively distinct coarse digit regime does not dominate this mean.
The declared \cegp{} point uses $\lambda=1/3$; $\lambda=0.50$ and $0.75$ are
additional fixed post-result deployment points, and $\lambda=1$ is Hard
\ceg{}. No point is selected from these OOD results. Every displayed nonraw
point improves the four-task mean for all seven bases.}
\label{tab:ceg-openood-frontier}
\vspace{1pt}\par\tiny $^\dagger$Uses native local features. $^\ddagger$Represents the numerically equivalent fixed-$N$ LogitGap family in this protection sweep.
\end{table*}

\paragraph{Protected frontier across native OpenOOD.}
Table~\ref{tab:ceg-openood-frontier} reports the deployment choice on the four
native non-digit tasks while retaining MNIST in the complete five-task
analyses. Every displayed nonraw point improves the mean for all seven bases.
This includes the external-corpus detectors: NegLabel moves from $50.08$ to
$47.61$, $45.95$, $42.88$, and $35.62$ as $\lambda$ increases, while CSP moves
from $44.49$ to $41.66$, $40.34$, $38.15$, and $33.16$. Thus the aggregate
conclusion does not rely on omitting strong external-semantic baselines. The
frontier is descriptive post-result evidence, not an OOD-selected operating
point.

% AUTO-GENERATED by scripts/107_ceg_channel_combination_ablation.py
\begin{center}
\begin{minipage}{\columnwidth}
\centering\scriptsize
\setlength{\tabcolsep}{2.1pt}
\captionof{table}{\textbf{Channel and combination ablation.} Family-balanced FPR95$\downarrow$ on the five strict CLIP tasks. Green/red denotes lower/higher FPR95 than the gray Raw reference by more than $0.10$ points; gray denotes a practical tie. ``Global'' removes local-token cues; the last column averages all five corpus-free bases. This is same-task development evidence.}
\label{tab:ceg-ablation}
\begin{tabular*}{\columnwidth}{@{\extracolsep{\fill}}lrrrrr}
\toprule
Rule & MCM & MaxL. & Energy & GL & All\\
\midrule
Raw base & \neutralcell{42.6} & \neutralcell{40.2} & \neutralcell{68.0} & \neutralcell{38.1} & \neutralcell{49.9}\\
\addlinespace[1pt]
Base $\wedge$ MaxLogit & \goodcell{35.8} & \badcell{40.7} & \goodcell{47.8} & \goodcell{31.6} & \goodcell{39.4}\\
Base $\wedge$ level only & \goodcell{32.0} & \goodcell{33.7} & \goodcell{37.1} & \goodcell{29.1} & \goodcell{33.3}\\
Base $\wedge$ sharpness only & \goodcell{37.5} & \goodcell{31.2} & \goodcell{35.6} & \goodcell{35.7} & \goodcell{36.4}\\
Global-only guard & \goodcell{35.8} & \goodcell{35.8} & \goodcell{40.1} & \goodcell{32.2} & \goodcell{36.4}\\
\addlinespace[1pt]
\textbf{Full CEG (minimum)} & \goodcell{30.5} & \goodcell{30.1} & \goodcell{32.1} & \goodcell{28.8} & \goodcell{30.6}\\
\addlinespace[1pt]
Mean of three ranks & \goodcell{32.6} & \goodcell{31.9} & \goodcell{47.7} & \goodcell{31.6} & \goodcell{35.3}\\
Simes & \goodcell{29.9} & \goodcell{29.1} & \goodcell{31.2} & \goodcell{28.2} & \goodcell{29.9}\\
Fisher & \goodcell{30.0} & \goodcell{29.1} & \goodcell{29.9} & \goodcell{28.6} & \goodcell{29.6}\\
\bottomrule
\end{tabular*}
\end{minipage}
\end{center}

\paragraph{Mean gain is bought with tail risk.}
Table~\ref{tab:ceg-lambda-frontier} sweeps the shrinkage weight
$\lambda$ from the raw base to the hard minimum and beyond. Within
the convex regime the two columns move together: larger mean
corrections come with larger worst-case regressions, from
$-0.41/{+}0.43$ at $\lambda=0.1$ to $-5.71/{+}16.04$ at the hard
guard, which also regresses on $55$ of $135$ task cells.
\cegp{} ($\lambda=1/3$) sits where the trade is most favourable:
it improves all $27$ checkpoint aggregates while bounding the worst
task regression at $+1.01$. Past $\lambda=1$ the score is no longer
a convex combination, and vetoing more aggressively fails outright
($+52.16$ mean, $+96.86$ worst at $\lambda=2$). Reliability
therefore does not increase monotonically with conservatism; as in
the channel substitutions of Figure~\ref{fig:ladder}, it depends on
how much evidence the added veto actually carries.

% AUTO-GENERATED by scripts/110_ceg_lambda_frontier.py
\begin{table}[t]
\centering\small
\setlength{\tabcolsep}{3.5pt}
\resizebox{\columnwidth}{!}{%
\begin{tabular}{llrrrc}
\toprule
$\lambda$ & Regime & Mean $\Delta$ & Worst $\Delta$ & Task I/T/W & Agg. I/T/W \\
\midrule
$0$ & Raw base & \neutralcell{$+0.00$} & \neutralcell{$+0.00$} & 0/135/0 & 0/27/0 \\
$0.1$ & Interpolation & \goodcell{$-0.41$} & \badcell{$+0.43$} & 95/29/11 & 26/1/0 \\
$0.2$ & Interpolation & \goodcell{$-0.82$} & \badcell{$+0.56$} & 102/19/14 & 27/0/0 \\
$1/3$ & CEG-P & \goodcell{$-1.47$} & \badcell{$+1.01$} & 110/11/14 & 27/0/0 \\
$0.5$ & Interpolation & \goodcell{$-2.34$} & \badcell{$+2.16$} & 112/6/17 & 27/0/0 \\
$0.75$ & Interpolation & \goodcell{$-4.16$} & \badcell{$+5.91$} & 103/8/24 & 27/0/0 \\
$1$ & Hard CEG & \goodcell{$-5.71$} & \badcell{$+16.04$} & 76/4/55 & 22/0/5 \\
\midrule
$1.25$ & Amplification & \badcell{$+23.89$} & \badcell{$+96.65$} & 26/0/109 & 3/0/24 \\
$1.5$ & Amplification & \badcell{$+48.56$} & \badcell{$+96.81$} & 9/0/126 & 0/0/27 \\
$2$ & Amplification & \badcell{$+52.16$} & \badcell{$+96.86$} & 3/0/132 & 0/0/27 \\
\bottomrule
\end{tabular}
}
\caption{\textbf{CEG deployment risk--reward frontier.} $S_\lambda=U_B-\lambda(U_B-\min\{U_B,U_L,U_S\})$ over 3 bases, 9 checkpoints, and 5 strict tasks. $\Delta$ is wrapped minus raw FPR95, so negative is better. I/T/W counts improvements, practical ties ($|\Delta|\le0.10$), and regressions. ``Worst'' is the largest task-level regression. Green/red/gray denotes improvement/regression/practical tie to Raw in the two $\Delta$ columns. The complete fixed grid is post-result descriptive evidence; no $\lambda$ is selected from these OOD results. $\lambda=1/3$ and $1$ are the declared CEG-P and hard-CEG references. Amplification ($\lambda>1$) is outside the protected convex regime and fails sharply.}
\label{tab:ceg-lambda-frontier}
\end{table}

\paragraph{Capacity-matched combination control.}
Table~\ref{tab:ceg-ablation} gives every combination rule exactly the same
three ranks $U_B,U_L,U_S$. Their compensatory mean reaches $35.3$ FPR95,
versus $30.6$ for the minimum, so the gain is not explained merely by exposing
the detector to three channels. Simes and Fisher reach $29.9$ and $29.6$ on
these same development tasks, so their numerical advantage is real. It answers
a different design objective: Fisher can offset one atypical rank with typical
ranks, while Simes discounts an isolated atypical rank unless multiple
channels agree. Both relax the single-channel veto and can therefore suppress
the missing-cue alarm identified by the audit. We retain the minimum as the
declared non-compensatory rule, not as the post-hoc numerical optimum.

\paragraph{Tie-margin sensitivity.}
In the nine-checkpoint deployment frontier, at $\lambda=1/3$, task I/T/W
counts are $114/4/17$, $110/11/14$, and $105/22/8$ for margins $0.05$, $0.10$,
and $0.20$, respectively. All $27/27$ checkpoint/method aggregates improve
at every margin; the same $27/27$ aggregate conclusion holds for
$\lambda=0.50$ and $0.75$.

\paragraph{Protected-weight provenance.}
The $2{:}1$ ratio was chosen after inspecting the transfer failures it was
designed to mitigate; it is not a held-out hyperparameter choice, and no
\cegp{} comparison is presented as confirmation. Every \cegp{} table and the
fixed $\lambda$ frontier are explicitly post-result development evidence. The
score uses no OOD sample during fitting or inference, but that resource
property does not make the weight selection prospective.

% \input{tables/tab_ceg_lambda_frontier}
% AUTO-GENERATED by scripts/109_make_ceg_checkpoint_table.py
\begin{table*}[t]
\centering\scriptsize
\setlength{\tabcolsep}{3.0pt}
\renewcommand{\arraystretch}{0.97}
\caption{\textbf{CEG-P and Hard CEG across three detector bases and nine checkpoints.} Each family row equally averages its three checkpoints; every checkpoint value equally averages the five strict tasks after within-task OOD-source aggregation. All numeric scores are FPR95$\downarrow$, and $\Delta$ is wrapped minus Raw, so negative is better. I/T/W gives improvements, practical ties, and regressions; counts before and after the semicolon refer to checkpoint aggregates and task cells. CEG-P improves 27/27 detector/checkpoint aggregates, whereas Hard CEG improves 22 and worsens 5. The evaluated bases are MCM, MaxLogit, GL-MCM$^\dagger$. Energy is absent because its native-scale score was not retained for all six added checkpoints, although it is available for the original CLIP-B/16 strict evaluation in Table~\ref{tab:ceg}; NegLabel and CSP are absent because their external-corpus scores were not recomputed for those checkpoints. Methods are unavailable rather than result-filtered. The extension is post-result robustness evidence.}
\label{tab:ceg-checkpoint-extension}
\begin{tabular*}{\textwidth}{@{\extracolsep{\fill}}llrrrlrrl@{}}
\toprule
VLM family & Base detector $B$ & Raw & \cegp{} & $\Delta_{\rm P}$ & P I/T/W (agg.; task) & Hard \ceg{} & $\Delta_{\rm H}$ & H I/T/W (agg.; task)\\
\midrule
\rowcolor{clipblue!7}\multicolumn{9}{l}{\textbf{CLIP}}\\
CLIP & MCM & 45.83 & 43.27 & \goodcell{-2.55} & 3/0/0; 14/1/0 & 32.91 & \goodcell{-12.92} & 3/0/0; 11/0/4\\
CLIP & MaxLogit & 38.78 & 36.89 & \goodcell{-1.89} & 3/0/0; 15/0/0 & 31.95 & \goodcell{-6.83} & 3/0/0; 12/0/3\\
CLIP & GL-MCM$^\dagger$ & 45.76 & 43.48 & \goodcell{-2.28} & 3/0/0; 14/0/1 & 32.52 & \goodcell{-13.24} & 3/0/0; 11/0/4\\
\addlinespace[1pt]
\rowcolor{sigorange!8}\multicolumn{9}{l}{\textbf{SigLIP2}}\\
SigLIP2 & MCM & 59.61 & 58.09 & \goodcell{-1.52} & 3/0/0; 10/4/1 & 47.37 & \goodcell{-12.24} & 3/0/0; 9/0/6\\
SigLIP2 & MaxLogit & 32.31 & 31.52 & \goodcell{-0.79} & 3/0/0; 12/1/2 & 33.85 & \badcell{+1.54} & 0/0/3; 5/1/9\\
SigLIP2 & GL-MCM$^\dagger$ & 61.72 & 59.71 & \goodcell{-2.01} & 3/0/0; 12/2/1 & 49.20 & \goodcell{-12.53} & 3/0/0; 9/0/6\\
\addlinespace[1pt]
\rowcolor{pegreen!7}\multicolumn{9}{l}{\textbf{PE-Core}}\\
PE-Core & MCM & 38.82 & 37.38 & \goodcell{-1.44} & 3/0/0; 12/1/2 & 33.70 & \goodcell{-5.12} & 2/0/1; 6/0/9\\
PE-Core & MaxLogit & 29.93 & 28.94 & \goodcell{-0.99} & 3/0/0; 11/1/3 & 28.75 & \goodcell{-1.18} & 2/0/1; 8/1/6\\
PE-Core & GL-MCM$^\dagger$ & 41.41 & 40.44 & \goodcell{-0.97} & 3/0/0; 10/1/4 & 36.14 & \goodcell{-5.27} & 3/0/0; 5/2/8\\
\addlinespace[1pt]
\midrule
\multicolumn{9}{l}{\textbf{All nine checkpoints}}\\
All & MCM & 48.09 & 46.25 & \goodcell{-1.84} & 9/0/0; 36/6/3 & 37.99 & \goodcell{-10.09} & 8/0/1; 26/0/19\\
All & MaxLogit & 33.67 & 32.45 & \goodcell{-1.22} & 9/0/0; 38/2/5 & 31.52 & \goodcell{-2.16} & 5/0/4; 25/2/18\\
All & GL-MCM$^\dagger$ & 49.63 & 47.88 & \goodcell{-1.75} & 9/0/0; 36/3/6 & 39.29 & \goodcell{-10.34} & 9/0/0; 25/2/18\\
\bottomrule
\end{tabular*}
\vspace{1pt}\par\tiny $^\dagger$GL-MCM uses native local tokens. Practical ties satisfy $|\Delta|\le0.10$ and are descriptive rather than statistical equivalence.
\end{table*}

% AUTO-GENERATED by scripts/104_make_ceg_paper_tables.py
\begin{table*}[t]
\centering\scriptsize
\setlength{\tabcolsep}{4.0pt}
\renewcommand{\arraystretch}{0.90}
\caption{\textbf{Aggregation robustness across evaluation weightings.} Each cell reports Raw$\to$\ceg{} FPR95$\downarrow$/AUROC$\uparrow$ for five base detectors. For MaxLogit, Energy, MCM, and GL-MCM, Fam. and no-MN use the final strict protocol: Fam. balances ImageNet, CIFAR, and MNIST, while no-MN excludes MNIST and rebalances ImageNet and CIFAR. The 13-unit column instead uses the independently frozen all-image broad cache of Tables~\ref{tab:ceg-all-0}--\ref{tab:ceg-all-2}, with every unit weighted equally and DomainNet counted once. MSP is absent from the strict cache, so all three MSP cells use the broad all-image protocol. Comparisons should be made within a column.}
\label{tab:ceg-weighting-sensitivity}
\begin{tabular*}{\textwidth}{@{\extracolsep{\fill}}lrrr}
\toprule
Base & Fam. & 13-unit & no-MN\\
\midrule
MSP & \goodnum{48.4$\!\to\!$30.7\,/\,86.3$\!\to\!$91.8} & \goodnum{57.2$\!\to\!$35.0\,/\,78.5$\!\to\!$88.8} & \goodnum{53.9$\!\to\!$38.0\,/\,84.3$\!\to\!$89.3}\\
\addlinespace[0.35pt]
MaxLogit & \goodnum{40.2$\!\to\!$30.1\,/\,89.8$\!\to\!$92.2} & \goodnum{41.3$\!\to\!$33.8\,/\,86.5$\!\to\!$89.2} & \goodnum{48.6$\!\to\!$37.2\,/\,87.1$\!\to\!$89.8}\\
\addlinespace[0.35pt]
Energy & \goodnum{68.0$\!\to\!$32.1\,/\,67.5$\!\to\!$91.3} & \goodnum{63.3$\!\to\!$32.7\,/\,69.8$\!\to\!$90.0} & \goodnum{84.8$\!\to\!$40.1\,/\,56.0$\!\to\!$88.4}\\
\addlinespace[0.35pt]
MCM & \goodnum{42.6$\!\to\!$30.5\,/\,88.7$\!\to\!$92.1} & \goodnum{48.2$\!\to\!$34.9\,/\,81.7$\!\to\!$89.0} & \goodnum{48.5$\!\to\!$37.8\,/\,86.3$\!\to\!$89.6}\\
\addlinespace[0.35pt]
GL-MCM$^\dagger$ & \goodnum{38.1$\!\to\!$28.8\,/\,89.4$\!\to\!$92.3} & \goodnum{44.8$\!\to\!$33.5\,/\,85.0$\!\to\!$89.3} & \goodnum{41.5$\!\to\!$36.2\,/\,87.6$\!\to\!$89.7}\\
\addlinespace[0.35pt]
\bottomrule
\end{tabular*}
\vspace{1pt}\par\tiny $^\dagger$Native local features; the comparison matches the five-base main-paper scope.
\end{table*}

\paragraph{Prospective transfer details.}
For frozen MCM+\ceg{}, the paired family-macro changes are $-12.07$
[$-12.99,-11.48$] for CLIP, $-14.27$ [$-15.49,-13.38$] for SigLIP2, and
$-8.60$ [$-9.11,-7.85$] for PE. Both SigLIP2 ImageNet tasks worsen, as do PE
ImageNet-200, CIFAR-10, and CIFAR-100: five of 15 point estimates. Four paired
intervals lie above zero and PE CIFAR-100 remains unresolved. \ceg{}(MaxLogit)
also worsens the SigLIP2 aggregate from $28.77$ to $30.25$.

\paragraph{Post-result checkpoint extension.}
Table~\ref{tab:ceg-checkpoint-extension} crosses three CLIP, three SigLIP2, and
three PE-Core checkpoints with every detector base available throughout the
strict cache: MCM, MaxLogit and GL-MCM. Tasks, images, calibration
manifests, prompts, and guard hyperparameters remain fixed. \cegp{} improves all
27 detector/checkpoint aggregates and 110/135 task cells; its nine
detector/family mean changes range from $-0.79$ to $-2.55$. Hard \ceg{} produces
larger mean gains but improves only 22/27 aggregates and 76/135 task cells.
Its clearest failure is MaxLogit on SigLIP2, whose family mean worsens by
$1.54$. Thus protection generalizes beyond MCM and better tolerates
architecture-dependent channel failures. Because the six additional
checkpoints were chosen after the original transfer was read, the extension is
robustness evidence rather than prospective confirmation.

\paragraph{Pre-frozen near-OOD confirmation.}
Under the pre-frozen reciprocal ImageNet-10/20 protocol, seven of eight
intervals exclude zero: MCM, GL-MCM, MaxLogit, Energy, fixed-gap, LogitGap,
and NegLabel. CSP ($[-1.75,1.25]$) is unresolved.
Because these are correlated ImageNet subsets, the multi-family suite remains
primary.

\FloatBarrier

\section{Protected All-Domain Audit}
\label{app:broad}

% AUTO-GENERATED by scripts/103_plot_broad_ceg_methods.py
\begin{table}[t]
\centering\scriptsize
\setlength{\tabcolsep}{1.5pt}
\renewcommand{\arraystretch}{0.94}
\caption{\textbf{Near- and far-OOD point macros for the broad protected audit.} Entries are Raw$\to{}$\cegp{} FPR95$\downarrow$; $\Delta$ is wrapped minus Raw. Gray marks a practical tie, $|\Delta|\leq0.10$. Each value equally averages the 13 display units (DomainNet counts once). Full-suite paired intervals are in Table~\ref{tab:ceg-ci}.}
\label{tab:cegp-nearfar-full}
\begin{tabular*}{\columnwidth}{@{\extracolsep{\fill}}llrrrr@{}}
\toprule
& & \multicolumn{2}{c}{Near OOD} & \multicolumn{2}{c}{Far OOD}\\
VLM & Base & Raw$\to{}$\cegp{} & $\Delta$ & Raw$\to{}$\cegp{} & $\Delta$\\
\midrule
CLIP-B/16 & MSP & $65.61\to63.58$ & \goodcell{$-2.03$} & $49.76\to46.93$ & \goodcell{$-2.83$}\\
 & MaxLogit & $43.24\to42.32$ & \goodcell{$-0.92$} & $39.90\to37.56$ & \goodcell{$-2.33$}\\
 & Energy & $58.73\to56.63$ & \goodcell{$-2.10$} & $67.38\to65.26$ & \goodcell{$-2.13$}\\
 & MCM & $58.39\to56.29$ & \goodcell{$-2.11$} & $39.35\to37.22$ & \goodcell{$-2.13$}\\
 & GL-MCM$^\dagger$ & $54.93\to52.88$ & \goodcell{$-2.04$} & $36.17\to34.35$ & \goodcell{$-1.82$}\\
\midrule
SigLIP2-B/16 & MSP & $62.95\to60.65$ & \goodcell{$-2.31$} & $56.61\to53.83$ & \goodcell{$-2.78$}\\
 & MaxLogit & $36.59\to36.40$ & \goodcell{$-0.19$} & $27.42\to27.09$ & \goodcell{$-0.33$}\\
 & Energy & $59.71\to58.27$ & \goodcell{$-1.45$} & $55.01\to53.40$ & \goodcell{$-1.62$}\\
 & MCM & $52.86\to51.44$ & \goodcell{$-1.42$} & $42.55\to41.23$ & \goodcell{$-1.32$}\\
 & GL-MCM$^\dagger$ & $54.48\to52.77$ & \goodcell{$-1.71$} & $43.43\to41.58$ & \goodcell{$-1.85$}\\
\midrule
PE-Core-B/16 & MSP & $57.65\to55.40$ & \goodcell{$-2.25$} & $40.67\to38.17$ & \goodcell{$-2.50$}\\
 & MaxLogit & $30.77\to30.92$ & \badcell{$+0.16$} & $24.83\to23.85$ & \goodcell{$-0.98$}\\
 & Energy & $55.58\to53.71$ & \goodcell{$-1.86$} & $62.86\to60.47$ & \goodcell{$-2.40$}\\
 & MCM & $46.53\to45.70$ & \goodcell{$-0.83$} & $28.02\to27.17$ & \goodcell{$-0.85$}\\
 & GL-MCM$^\dagger$ & $48.70\to47.92$ & \goodcell{$-0.78$} & $28.94\to28.38$ & \goodcell{$-0.56$}\\
\bottomrule
\end{tabular*}
\vspace{1pt}\par\tiny $^\dagger$Uses native local tokens.
\end{table}

\paragraph{Broad failure locations.}
Across all five disclosed bases, nominal unadjusted intervals classify 136
display units as improvements, 49 as inconclusive, and 10 as regressions.
Regressions concentrate in strong domain-specific bases,
notably Energy/MaxLogit on EuroSAT and MSP/MCM on DomainNet. For MCM alone,
paired suite changes are $-2.19$ [$-2.54,-1.83$] for CLIP, $-1.43$
[$-1.86,-1.02$] for SigLIP2, and $-0.92$ [$-1.33,-0.54$] for PE; its sole
resolved regression is PE DomainNet ($89.43\rightarrow89.74$).

\paragraph{Symmetric cross-fit control.}
The original broad protocol scores held-out ID with one fold-specific guard but
averages five guard scores for each OOD image. We recomputed every point
estimate foldwise, scoring held-out ID and OOD with the same guard before
averaging folds. All 15 suite deltas remain negative; relative to the reported
construction, changes lie between $-0.21$ and $0.00$ FPR95 points. The
asymmetry is therefore real and disclosed, but does not explain the broad mean
gains.

\paragraph{Trade-offs and alternatives.}
The hard guard is not a Pareto fix: Table~\ref{tab:ceg-lambda-frontier} shows a
$+16.04$ worst task regression and 55/135 regressed task cells at $\lambda=1$,
whereas \cegp{} limits the worst regression to $+1.01$ and improves 110/135
cells. Regressed residual
fusion, spatial coherence, and weighted addition hurt strong bases or traded
mean against worst case. Simes/Fisher modestly beat the minimum on the
development suite (Table~\ref{tab:ceg-ablation}) but lose its strict
intersection semantics and were not prospectively transferred. \cegp{} is the
disclosed post-result compromise: it shrinks \ceg{} toward the base, improving
transfer stability while giving up much of the minimum rule's aggregate gain.

\paragraph{Complete domain-level results.}
Tables~\ref{tab:ceg-all-0} to~\ref{tab:ceg-all-2} report FPR95 and AUROC for
Raw, CEG, and CEG-P on every one of the 13 display units and each VLM.
They use all evaluation images and preserve the post-result status of this
broad audit; the tables are descriptive rather than an independent
confirmation.

\paragraph{Unaggregated source transparency.}
Tables~\ref{tab:mcm-per-ood-1} to~\ref{tab:mcm-per-ood-5} expand the MCM base used by Figure~\ref{fig:cegarch}
into every raw ID dataset and every declared OOD source. Unlike the display-unit
tables, it does not average DomainNet styles or near/far sources. The complete
machine-readable source table additionally includes all five bases, three
variants, both FPR95 and AUROC, and all three VLMs.

At this source resolution, \cegp{} improves 896 of 1,185
base/VLM/ID/OOD point comparisons, leaves 51 unchanged, and worsens 238;
far sources improve more often and more strongly (494/645, $-1.40$ points on
average) than near sources (402/540, $-1.05$). For MCM alone the counts are
183/237 improved, six unchanged, and 48 worsened. The largest protected gains
are CLIP CIFAR-100$\to$MNIST ($-7.6$) and MNIST$\to$Places365 ($-7.1$), while
the recurring failure is DomainNet-real$\to$iNaturalist ($+2.1$ to $+3.0$)
on every VLM. This natural-to-natural failure is consistent with the guard's
boundary: if OOD remains ID-like in the base, level, and sharpness channels,
there is no missing cue for a minimum to veto. Source averaging would hide
this structured failure as apparently random noise.

\FloatBarrier
\clearpage
% AUTO-GENERATED by scripts/104_make_ceg_paper_tables.py
\begin{table*}[t]
\centering\small
\setlength{\tabcolsep}{3.6pt}
\renewcommand{\arraystretch}{0.88}
\caption{\textbf{Full all-domain CEG audit: CLIP-B/16.} Each cell is FPR95$\downarrow$/AUROC$\uparrow$ on every image; Raw is the base score, CEG is the non-compensatory minimum, and CEG-P is the protected blend. For guarded rows, green/red denotes FPR95 lower/higher than Raw by more than $0.10$ points and gray denotes a practical tie; color is keyed to FPR95 only. Values average OOD sources within a task; DomainNet averages its six styles once. This is post-result development evidence.}
\label{tab:ceg-all-0}
\textbf{Native and natural/fine-grained units}\par\vspace{1pt}
\begin{tabular*}{\textwidth}{@{\extracolsep{\fill}}llrrrrrrr}
\toprule
Base & Variant & IN-1K & C-10 & C-100 & MNIST & IN-200 & CUB & Food\\
\midrule
MSP & Raw & 66.4/78.1 & 26.0/93.7 & 86.9/75.1 & 37.3/90.1 & 36.3/90.3 & 79.0/68.1 & 20.8/94.6\\
 & CEG & \goodcell{54.0/82.3} & \goodcell{11.7/97.1} & \goodcell{57.7/85.7} & \goodcell{16.1/96.6} & \goodcell{28.7/92.2} & \goodcell{4.3/98.2} & \goodcell{5.0/97.8}\\
 & CEG-P & \goodcell{64.0/79.8} & \goodcell{22.4/94.8} & \goodcell{83.1/78.4} & \goodcell{32.4/92.1} & \goodcell{34.4/91.1} & \goodcell{74.9/75.5} & \goodcell{17.6/95.5}\\
\addlinespace[1pt]
MaxLogit & Raw & 75.7/76.9 & 14.5/96.2 & 61.6/85.7 & 23.4/95.1 & 44.8/89.5 & 3.0/98.7 & 5.2/98.1\\
 & CEG & \goodcell{54.5/82.3} & \goodcell{11.2/97.1} & \goodcell{52.7/87.5} & \goodcell{15.3/97.1} & \goodcell{30.4/92.0} & \badcell{3.2/98.5} & \goodcell{4.7/98.0}\\
 & CEG-P & \goodcell{71.5/78.7} & \goodcell{13.2/96.6} & \goodcell{59.1/86.5} & \goodcell{22.0/95.7} & \goodcell{40.2/90.4} & \neutralcell{3.0/98.6} & \goodcell{5.0/98.1}\\
\addlinespace[1pt]
Energy & Raw & 97.1/41.2 & 68.3/70.5 & 76.7/64.4 & 34.0/90.3 & 94.9/50.0 & 19.8/93.7 & 52.0/86.0\\
 & CEG & \goodcell{58.0/79.4} & \goodcell{12.8/96.8} & \goodcell{55.5/86.9} & \goodcell{15.5/97.0} & \goodcell{34.2/90.6} & \goodcell{4.1/98.2} & \goodcell{4.9/97.9}\\
 & CEG-P & \goodcell{96.7/50.7} & \goodcell{65.8/77.7} & \goodcell{75.7/71.1} & \goodcell{31.9/92.3} & \goodcell{93.3/60.5} & \goodcell{16.8/94.9} & \goodcell{44.1/89.4}\\
\addlinespace[1pt]
MCM & Raw & 60.7/81.9 & 19.3/94.7 & 82.8/76.3 & 31.5/93.4 & 34.8/90.9 & 26.1/94.1 & 8.6/97.2\\
 & CEG & \goodcell{53.9/82.9} & \goodcell{11.0/97.2} & \goodcell{57.4/85.8} & \goodcell{15.3/97.0} & \goodcell{29.5/92.2} & \goodcell{4.0/98.4} & \goodcell{4.9/97.9}\\
 & CEG-P & \goodcell{58.7/82.6} & \goodcell{16.6/95.5} & \goodcell{79.5/79.2} & \goodcell{27.2/94.5} & \goodcell{32.9/91.5} & \goodcell{21.4/95.2} & \goodcell{7.6/97.4}\\
\addlinespace[1pt]
GL-MCM$^\dagger$ & Raw & 53.4/83.2 & 16.4/95.3 & 68.1/79.8 & 31.4/92.9 & 30.2/91.6 & 49.1/85.6 & 5.6/98.0\\
 & CEG & \goodcell{52.8/82.9} & \goodcell{10.8/97.1} & \goodcell{53.7/86.4} & \goodcell{13.9/97.3} & \goodcell{28.5/92.2} & \goodcell{4.1/98.3} & \goodcell{4.8/98.0}\\
 & CEG-P & \goodcell{52.6/83.4} & \goodcell{14.7/95.9} & \goodcell{65.7/81.9} & \goodcell{28.9/94.2} & \goodcell{29.2/91.9} & \goodcell{43.8/88.8} & \goodcell{5.4/98.0}\\
\bottomrule
\end{tabular*}
\par
\vspace{5pt}
\textbf{Remaining domains and equal-unit mean}\par\vspace{1pt}
\begin{tabular*}{\textwidth}{@{\extracolsep{\fill}}llrrrrrrr}
\toprule
Base & Variant & Pets & Caltech & EuroSAT & DTD & VOC & DomainNet & Mean\\
\midrule
MSP & Raw & 37.8/91.3 & 32.6/91.2 & 99.2/31.0 & 83.3/71.4 & 47.5/84.2 & 90.0/61.8 & 57.2/78.5\\
 & CEG & \goodcell{0.5/99.7} & \goodcell{27.2/93.3} & \goodcell{65.1/67.3} & \goodcell{58.2/87.6} & \goodcell{41.1/88.9} & \goodcell{84.9/67.1} & \goodcell{35.0/88.8}\\
 & CEG-P & \goodcell{33.2/93.6} & \goodcell{30.9/92.0} & \goodcell{99.0/41.0} & \goodcell{81.1/76.2} & \neutralcell{47.5/85.7} & \goodcell{89.8/63.6} & \goodcell{54.6/81.5}\\
\addlinespace[1pt]
MaxLogit & Raw & 0.4/99.9 & 45.5/88.3 & 57.2/69.3 & 64.5/85.4 & 53.8/84.3 & 87.6/56.7 & 41.3/86.5\\
 & CEG & \neutralcell{0.4/99.7} & \goodcell{28.8/92.7} & \badcell{57.8/70.6} & \goodcell{58.1/88.1} & \goodcell{38.5/89.3} & \goodcell{84.0/66.1} & \goodcell{33.8/89.2}\\
 & CEG-P & \neutralcell{0.4/99.8} & \goodcell{42.5/89.7} & \goodcell{56.9/69.8} & \goodcell{61.5/86.7} & \goodcell{51.9/85.8} & \goodcell{87.4/59.7} & \goodcell{39.6/87.4}\\
\addlinespace[1pt]
Energy & Raw & 18.1/95.0 & 95.1/49.1 & 16.8/95.9 & 74.9/78.6 & 91.0/54.3 & 84.6/38.9 & 63.3/69.8\\
 & CEG & \goodcell{0.4/99.7} & \goodcell{33.6/91.2} & \badcell{24.1/93.5} & \goodcell{58.2/87.4} & \goodcell{43.6/87.7} & \goodcell{79.6/64.0} & \goodcell{32.7/90.0}\\
 & CEG-P & \goodcell{14.8/96.3} & \goodcell{94.0/60.3} & \badcell{17.5/95.6} & \goodcell{71.7/82.0} & \goodcell{89.4/62.9} & \neutralcell{84.5/44.7} & \goodcell{61.2/75.2}\\
\addlinespace[1pt]
MCM & Raw & 7.5/98.4 & 37.5/91.1 & 99.6/20.2 & 83.4/74.2 & 44.6/86.0 & 90.3/63.8 & 48.2/81.7\\
 & CEG & \goodcell{0.5/99.7} & \goodcell{28.0/93.1} & \goodcell{65.0/67.3} & \goodcell{57.7/88.1} & \goodcell{41.3/89.1} & \goodcell{85.1/68.6} & \goodcell{34.9/89.0}\\
 & CEG-P & \goodcell{5.7/98.7} & \goodcell{33.7/91.8} & \neutralcell{99.5/33.3} & \goodcell{80.8/78.4} & \goodcell{44.2/87.0} & \neutralcell{90.3/65.5} & \goodcell{46.0/83.9}\\
\addlinespace[1pt]
GL-MCM$^\dagger$ & Raw & 10.8/97.7 & 34.5/92.2 & 93.1/50.3 & 64.8/83.8 & 37.2/89.1 & 87.8/65.6 & 44.8/85.0\\
 & CEG & \goodcell{0.5/99.7} & \goodcell{28.2/93.2} & \goodcell{64.2/69.0} & \goodcell{53.1/88.9} & \goodcell{36.2/89.8} & \goodcell{84.6/68.4} & \goodcell{33.5/89.3}\\
 & CEG-P & \goodcell{8.5/98.2} & \goodcell{31.0/92.6} & \goodcell{91.2/55.6} & \goodcell{62.7/85.5} & \goodcell{35.2/89.4} & \badcell{88.0/66.7} & \goodcell{42.8/86.3}\\
\bottomrule
\end{tabular*}
\par
\end{table*}

\begin{table*}[t]
\centering\small
\setlength{\tabcolsep}{3.6pt}
\renewcommand{\arraystretch}{0.88}
\caption{\textbf{Full all-domain CEG audit: SigLIP2-B/16.} Each cell is FPR95$\downarrow$/AUROC$\uparrow$ on every image; Raw is the base score, CEG is the non-compensatory minimum, and CEG-P is the protected blend. For guarded rows, green/red denotes FPR95 lower/higher than Raw by more than $0.10$ points and gray denotes a practical tie; color is keyed to FPR95 only. Values average OOD sources within a task; DomainNet averages its six styles once. This is post-result development evidence.}
\label{tab:ceg-all-1}
\textbf{Native and natural/fine-grained units}\par\vspace{1pt}
\begin{tabular*}{\textwidth}{@{\extracolsep{\fill}}llrrrrrrr}
\toprule
Base & Variant & IN-1K & C-10 & C-100 & MNIST & IN-200 & CUB & Food\\
\midrule
MSP & Raw & 56.5/83.6 & 36.1/93.5 & 85.9/77.4 & 88.1/83.1 & 23.4/94.3 & 78.3/77.6 & 32.5/93.6\\
 & CEG & \badcell{58.6/84.0} & \goodcell{18.1/95.4} & \goodcell{59.3/84.2} & \goodcell{53.4/88.0} & \badcell{23.9/93.6} & \goodcell{2.9/98.8} & \goodcell{6.7/97.5}\\
 & CEG-P & \goodcell{55.9/84.4} & \goodcell{31.3/94.3} & \goodcell{83.5/79.7} & \goodcell{85.8/84.8} & \goodcell{22.4/94.4} & \goodcell{72.3/82.8} & \goodcell{27.2/94.7}\\
\addlinespace[1pt]
MaxLogit & Raw & 55.8/84.9 & 8.0/97.6 & 39.7/90.9 & 10.0/97.9 & 25.6/93.5 & 2.0/99.2 & 3.8/98.4\\
 & CEG & \badcell{58.7/84.3} & \badcell{10.4/96.9} & \badcell{44.9/88.8} & \badcell{12.5/97.5} & \goodcell{25.4/93.3} & \neutralcell{2.1/99.1} & \badcell{4.2/98.2}\\
 & CEG-P & \goodcell{55.3/85.2} & \neutralcell{8.1/97.6} & \goodcell{38.9/90.8} & \goodcell{9.5/98.0} & \goodcell{24.4/93.7} & \neutralcell{1.9/99.2} & \neutralcell{3.8/98.4}\\
\addlinespace[1pt]
Energy & Raw & 95.8/47.1 & 49.9/77.6 & 73.8/65.5 & 11.8/97.4 & 94.6/52.1 & 30.2/91.8 & 15.4/95.2\\
 & CEG & \goodcell{69.5/79.4} & \goodcell{20.8/94.7} & \goodcell{56.3/85.2} & \badcell{13.7/97.3} & \goodcell{38.3/89.6} & \goodcell{2.8/98.8} & \goodcell{6.4/97.7}\\
 & CEG-P & \goodcell{95.2/54.7} & \goodcell{47.1/82.0} & \goodcell{72.8/70.9} & \goodcell{11.4/97.6} & \goodcell{92.9/61.6} & \goodcell{26.5/93.7} & \goodcell{12.4/96.0}\\
\addlinespace[1pt]
MCM & Raw & 52.1/86.7 & 16.6/95.2 & 85.7/78.8 & 88.9/82.4 & 19.2/94.8 & 21.2/95.7 & 11.1/96.5\\
 & CEG & \badcell{58.5/84.5} & \badcell{16.8/95.7} & \goodcell{58.5/84.3} & \goodcell{53.3/87.8} & \badcell{22.6/93.9} & \goodcell{2.6/98.9} & \goodcell{6.3/97.6}\\
 & CEG-P & \goodcell{51.7/86.6} & \goodcell{14.9/95.6} & \goodcell{83.5/80.7} & \goodcell{87.2/84.1} & \goodcell{19.1/94.8} & \goodcell{16.7/96.5} & \goodcell{9.3/96.9}\\
\addlinespace[1pt]
GL-MCM$^\dagger$ & Raw & 55.8/85.5 & 25.7/94.3 & 83.0/76.6 & 83.3/82.0 & 25.7/93.5 & 27.5/94.3 & 13.5/96.2\\
 & CEG & \badcell{59.3/84.4} & \goodcell{18.2/95.4} & \goodcell{57.1/84.5} & \goodcell{52.9/87.5} & \badcell{27.3/93.0} & \goodcell{2.6/98.9} & \goodcell{6.4/97.6}\\
 & CEG-P & \goodcell{55.3/85.8} & \goodcell{20.9/94.8} & \goodcell{80.3/79.2} & \goodcell{81.3/83.7} & \goodcell{25.0/93.7} & \goodcell{22.4/95.5} & \goodcell{10.5/96.6}\\
\bottomrule
\end{tabular*}
\par
\vspace{5pt}
\textbf{Remaining domains and equal-unit mean}\par\vspace{1pt}
\begin{tabular*}{\textwidth}{@{\extracolsep{\fill}}llrrrrrrr}
\toprule
Base & Variant & Pets & Caltech & EuroSAT & DTD & VOC & DomainNet & Mean\\
\midrule
MSP & Raw & 46.2/91.9 & 19.4/94.9 & 99.5/24.5 & 70.3/81.1 & 57.2/84.4 & 83.5/67.2 & 59.8/80.5\\
 & CEG & \goodcell{0.4/99.7} & \goodcell{16.8/94.5} & \goodcell{75.1/61.3} & \goodcell{65.5/85.1} & \goodcell{45.3/86.3} & \badcell{86.3/64.9} & \goodcell{39.4/87.2}\\
 & CEG-P & \goodcell{39.9/94.0} & \goodcell{17.0/95.0} & \goodcell{99.3/34.5} & \goodcell{68.7/82.9} & \goodcell{56.1/85.3} & \badcell{83.7/67.2} & \goodcell{57.2/82.6}\\
\addlinespace[1pt]
MaxLogit & Raw & 0.1/100.0 & 10.9/96.1 & 49.1/73.0 & 71.5/83.7 & 40.1/86.7 & 87.8/55.9 & 31.1/89.1\\
 & CEG & \neutralcell{0.1/99.8} & \badcell{11.1/95.5} & \badcell{54.6/71.2} & \goodcell{68.0/85.2} & \badcell{43.6/87.1} & \goodcell{85.0/64.1} & \badcell{32.4/89.3}\\
 & CEG-P & \neutralcell{0.1/99.9} & \goodcell{10.7/96.0} & \badcell{49.7/72.5} & \goodcell{69.9/84.9} & \badcell{40.3/87.1} & \neutralcell{87.8/58.3} & \goodcell{30.8/89.3}\\
\addlinespace[1pt]
Energy & Raw & 4.1/98.7 & 89.7/68.8 & 9.3/97.9 & 90.9/63.5 & 83.2/66.8 & 84.1/33.3 & 56.4/73.5\\
 & CEG & \goodcell{0.3/99.8} & \goodcell{22.5/93.4} & \badcell{17.4/96.3} & \goodcell{73.9/81.1} & \goodcell{52.2/84.2} & \goodcell{78.9/62.0} & \goodcell{34.8/89.2}\\
 & CEG-P & \goodcell{3.3/98.9} & \goodcell{86.7/75.5} & \badcell{10.0/97.7} & \goodcell{89.8/69.1} & \goodcell{80.9/71.6} & \neutralcell{84.1/38.4} & \goodcell{54.9/77.5}\\
\addlinespace[1pt]
MCM & Raw & 7.3/98.3 & 12.5/95.6 & 99.7/13.4 & 64.6/86.1 & 50.7/86.7 & 86.1/65.9 & 47.4/82.8\\
 & CEG & \goodcell{0.3/99.8} & \badcell{13.5/95.1} & \goodcell{75.1/61.0} & \goodcell{64.4/86.2} & \goodcell{45.4/86.9} & \badcell{87.4/64.9} & \goodcell{38.8/87.4}\\
 & CEG-P & \goodcell{5.3/98.7} & \goodcell{12.0/95.6} & \neutralcell{99.6/27.2} & \goodcell{63.2/86.8} & \goodcell{49.0/87.0} & \badcell{86.3/66.1} & \goodcell{46.0/84.4}\\
\addlinespace[1pt]
GL-MCM$^\dagger$ & Raw & 14.2/97.0 & 18.7/94.3 & 99.9/11.0 & 54.0/89.0 & 43.6/87.2 & 86.2/65.6 & 48.5/82.0\\
 & CEG & \goodcell{0.3/99.8} & \goodcell{17.3/94.3} & \goodcell{74.9/61.8} & \badcell{61.3/87.6} & \badcell{44.9/86.8} & \badcell{87.5/64.2} & \goodcell{39.3/87.4}\\
 & CEG-P & \goodcell{9.9/97.8} & \goodcell{16.8/94.5} & \neutralcell{99.9/26.1} & \goodcell{53.2/89.3} & \badcell{45.1/87.3} & \badcell{86.3/65.7} & \goodcell{46.7/83.8}\\
\bottomrule
\end{tabular*}
\par
\end{table*}

\begin{table*}[t]
\centering\small
\setlength{\tabcolsep}{3.6pt}
\renewcommand{\arraystretch}{0.88}
\caption{\textbf{Full all-domain CEG audit: PE-Core-B/16.} Each cell is FPR95$\downarrow$/AUROC$\uparrow$ on every image; Raw is the base score, CEG is the non-compensatory minimum, and CEG-P is the protected blend. For guarded rows, green/red denotes FPR95 lower/higher than Raw by more than $0.10$ points and gray denotes a practical tie; color is keyed to FPR95 only. Values average OOD sources within a task; DomainNet averages its six styles once. This is post-result development evidence.}
\label{tab:ceg-all-2}
\textbf{Native and natural/fine-grained units}\par\vspace{1pt}
\begin{tabular*}{\textwidth}{@{\extracolsep{\fill}}llrrrrrrr}
\toprule
Base & Variant & IN-1K & C-10 & C-100 & MNIST & IN-200 & CUB & Food\\
\midrule
MSP & Raw & 64.6/79.4 & 13.3/96.7 & 49.0/87.2 & 48.5/85.5 & 35.4/92.1 & 38.7/90.9 & 20.3/95.0\\
 & CEG & \goodcell{53.6/84.4} & \goodcell{11.0/97.1} & \goodcell{43.3/88.8} & \goodcell{10.6/97.4} & \goodcell{32.1/92.0} & \goodcell{3.3/98.8} & \goodcell{5.7/97.8}\\
 & CEG-P & \goodcell{62.5/81.3} & \goodcell{11.7/96.9} & \goodcell{46.9/88.1} & \goodcell{44.0/88.6} & \goodcell{33.3/92.5} & \goodcell{33.6/93.0} & \goodcell{17.5/95.8}\\
\addlinespace[1pt]
MaxLogit & Raw & 60.0/84.2 & 12.2/96.8 & 32.5/93.0 & 3.0/99.2 & 28.6/93.3 & 1.8/99.3 & 3.6/98.4\\
 & CEG & \goodcell{53.3/85.3} & \goodcell{11.5/96.9} & \badcell{34.4/92.1} & \badcell{3.8/99.0} & \badcell{29.5/92.7} & \badcell{2.0/99.1} & \badcell{3.8/98.3}\\
 & CEG-P & \goodcell{58.3/85.0} & \goodcell{11.0/97.0} & \goodcell{31.2/93.1} & \goodcell{2.9/99.2} & \goodcell{27.6/93.4} & \neutralcell{1.8/99.2} & \neutralcell{3.6/98.4}\\
\addlinespace[1pt]
Energy & Raw & 97.2/37.0 & 69.0/54.3 & 75.9/53.8 & 3.2/99.3 & 95.9/38.9 & 44.0/90.2 & 15.5/95.6\\
 & CEG & \goodcell{59.4/80.2} & \goodcell{18.2/95.3} & \goodcell{50.1/86.7} & \badcell{3.4/99.1} & \goodcell{44.8/88.4} & \goodcell{3.5/98.8} & \goodcell{5.0/98.0}\\
 & CEG-P & \goodcell{97.0/46.7} & \goodcell{66.4/63.0} & \goodcell{74.5/60.7} & \goodcell{2.9/99.3} & \goodcell{95.2/49.8} & \goodcell{36.3/92.6} & \goodcell{12.7/96.3}\\
\addlinespace[1pt]
MCM & Raw & 51.1/85.7 & 8.8/97.4 & 35.3/91.5 & 37.5/89.0 & 25.0/93.6 & 4.9/98.4 & 7.0/97.3\\
 & CEG & \goodcell{50.7/85.7} & \badcell{9.2/97.3} & \badcell{37.0/90.6} & \goodcell{9.5/97.7} & \badcell{27.6/92.8} & \goodcell{2.5/98.9} & \goodcell{5.2/97.9}\\
 & CEG-P & \goodcell{50.2/86.1} & \goodcell{8.6/97.4} & \goodcell{34.7/91.6} & \goodcell{33.3/91.3} & \goodcell{24.6/93.6} & \goodcell{4.1/98.5} & \goodcell{6.6/97.5}\\
\addlinespace[1pt]
GL-MCM$^\dagger$ & Raw & 48.4/86.4 & 9.7/97.2 & 39.9/90.3 & 46.9/86.4 & 24.2/93.6 & 5.4/98.2 & 7.9/97.0\\
 & CEG & \badcell{50.4/85.6} & \badcell{10.1/97.1} & \badcell{40.1/89.7} & \goodcell{11.6/97.2} & \badcell{28.7/92.6} & \goodcell{2.7/98.9} & \goodcell{5.3/97.9}\\
 & CEG-P & \goodcell{48.1/86.5} & \goodcell{9.5/97.2} & \goodcell{39.3/90.4} & \goodcell{42.7/89.3} & \neutralcell{24.2/93.6} & \goodcell{4.6/98.4} & \goodcell{7.3/97.3}\\
\bottomrule
\end{tabular*}
\par
\vspace{5pt}
\textbf{Remaining domains and equal-unit mean}\par\vspace{1pt}
\begin{tabular*}{\textwidth}{@{\extracolsep{\fill}}llrrrrrrr}
\toprule
Base & Variant & Pets & Caltech & EuroSAT & DTD & VOC & DomainNet & Mean\\
\midrule
MSP & Raw & 25.2/95.9 & 32.5/91.9 & 98.7/40.4 & 63.4/84.4 & 47.6/86.2 & 87.9/62.3 & 48.1/83.7\\
 & CEG & \goodcell{0.4/99.7} & \goodcell{26.1/92.4} & \goodcell{37.6/87.0} & \goodcell{61.2/86.8} & \goodcell{39.9/88.5} & \goodcell{84.4/66.4} & \goodcell{31.5/90.5}\\
 & CEG-P & \goodcell{19.4/96.9} & \goodcell{30.3/92.4} & \goodcell{98.1/52.6} & \goodcell{63.1/85.6} & \goodcell{45.1/87.1} & \badcell{88.2/63.7} & \goodcell{45.7/85.7}\\
\addlinespace[1pt]
MaxLogit & Raw & 0.3/99.9 & 12.4/95.8 & 37.2/89.2 & 35.0/93.0 & 42.6/87.2 & 83.6/66.7 & 27.1/92.0\\
 & CEG & \neutralcell{0.2/99.8} & \badcell{18.4/94.5} & \goodcell{35.8/87.9} & \badcell{49.2/90.6} & \goodcell{41.8/88.6} & \goodcell{82.7/66.7} & \badcell{28.2/91.6}\\
 & CEG-P & \neutralcell{0.3/99.9} & \badcell{13.1/95.6} & \goodcell{35.7/89.2} & \neutralcell{35.1/93.0} & \goodcell{42.5/87.9} & \goodcell{83.4/67.0} & \goodcell{26.7/92.1}\\
\addlinespace[1pt]
Energy & Raw & 39.8/87.5 & 67.3/74.2 & 4.2/99.0 & 74.8/76.5 & 96.0/51.6 & 81.6/47.1 & 58.8/69.6\\
 & CEG & \goodcell{0.4/99.7} & \goodcell{30.2/92.6} & \badcell{17.9/97.5} & \goodcell{65.0/85.3} & \goodcell{46.9/84.4} & \goodcell{78.0/63.5} & \goodcell{32.5/90.0}\\
 & CEG-P & \goodcell{33.6/90.7} & \goodcell{63.4/79.2} & \badcell{4.5/99.0} & \goodcell{73.4/80.1} & \goodcell{95.3/59.9} & \neutralcell{81.6/50.5} & \goodcell{56.7/74.4}\\
\addlinespace[1pt]
MCM & Raw & 1.0/99.8 & 26.7/93.1 & 98.9/27.8 & 45.6/89.9 & 37.9/88.4 & 89.4/61.7 & 36.1/85.7\\
 & CEG & \goodcell{0.3/99.8} & \goodcell{26.2/93.0} & \goodcell{37.4/87.1} & \badcell{54.1/88.9} & \badcell{38.7/89.2} & \goodcell{85.1/66.9} & \goodcell{29.5/91.2}\\
 & CEG-P & \goodcell{0.8/99.7} & \goodcell{24.7/93.3} & \goodcell{98.5/43.7} & \goodcell{45.4/90.1} & \goodcell{36.5/88.9} & \badcell{89.7/63.4} & \goodcell{35.2/87.3}\\
\addlinespace[1pt]
GL-MCM$^\dagger$ & Raw & 1.0/99.7 & 26.5/93.1 & 99.7/21.8 & 55.4/87.8 & 34.7/89.2 & 89.1/61.5 & 37.6/84.8\\
 & CEG & \goodcell{0.3/99.8} & \badcell{27.4/92.8} & \goodcell{36.8/87.7} & \badcell{57.4/87.7} & \badcell{39.4/89.1} & \goodcell{85.0/66.6} & \goodcell{30.4/91.0}\\
 & CEG-P & \goodcell{0.8/99.7} & \goodcell{24.9/93.3} & \neutralcell{99.6/40.3} & \goodcell{54.7/88.2} & \neutralcell{34.6/89.5} & \badcell{89.4/63.1} & \goodcell{36.9/86.7}\\
\bottomrule
\end{tabular*}
\par
\end{table*}

\clearpage

% AUTO-GENERATED by scripts/104_make_ceg_paper_tables.py
\begin{table*}[p]
\centering\scriptsize
\setlength{\tabcolsep}{7pt}
\renewcommand{\arraystretch}{0.94}
\caption{\textbf{Unaggregated ID$\times$OOD-source MCM results (part 1/5).} Each VLM cell is Raw/CEG/CEG-P FPR95$\downarrow$ (\%). Guarded numbers are green/red when lower/higher than Raw by more than $0.10$ points and gray for a practical tie. No OOD source or DomainNet style is averaged. Guarded variants average five ID-only calibration seeds.}
\label{tab:mcm-per-ood-1}
\begin{tabular*}{\textwidth}{@{\extracolsep{\fill}}lllrrr@{}}
\toprule
ID dataset & OOD source & Type & CLIP-B/16 & SigLIP2-B/16 & PE-Core-B/16\\
\midrule
IN-1K & SSB-hard & Near & 88.7/\goodnum{85.5}/\goodnum{88.1} & 79.2/\badnum{82.3}/\neutralnum{79.2} & 85.3/\goodnum{84.0}/\goodnum{85.1}\\
 & NINCO & Near & 79.9/\goodnum{72.0}/\goodnum{78.7} & 65.8/\badnum{70.3}/\goodnum{65.4} & 71.0/\goodnum{69.0}/\goodnum{70.1}\\
 & iNaturalist & Far & 31.9/\goodnum{16.0}/\goodnum{27.3} & 30.5/\badnum{42.0}/\goodnum{29.8} & 28.0/\badnum{30.1}/\goodnum{26.9}\\
 & Textures & Far & 59.5/\badnum{59.9}/\goodnum{58.5} & 54.1/\badnum{57.9}/\goodnum{53.8} & 42.7/\goodnum{39.6}/\goodnum{41.1}\\
 & OpenImage-O & Far & 43.7/\goodnum{36.1}/\goodnum{41.0} & 30.8/\badnum{40.1}/\goodnum{30.4} & 28.5/\badnum{30.8}/\goodnum{27.7}\\
\addlinespace[1.5pt]
CIFAR-10 & CIFAR-100 & Near & 35.6/\goodnum{29.7}/\goodnum{33.3} & 31.8/\badnum{32.8}/\goodnum{30.0} & 18.2/\badnum{23.4}/\badnum{18.6}\\
 & TinyImageNet & Near & 34.4/\goodnum{16.3}/\goodnum{29.5} & 28.4/\badnum{30.7}/\goodnum{27.6} & 20.1/\badnum{22.1}/\neutralnum{20.2}\\
 & MNIST & Far & 0.8/\badnum{1.0}/\neutralnum{0.7} & 5.3/\badnum{11.9}/\goodnum{3.5} & 0.1/\neutralnum{0.1}/\neutralnum{0.1}\\
 & SVHN & Far & 1.6/\badnum{2.2}/\neutralnum{1.7} & 1.2/\badnum{4.9}/\goodnum{1.0} & 0.1/\neutralnum{0.2}/\neutralnum{0.1}\\
 & Places365 & Far & 29.6/\goodnum{11.1}/\goodnum{24.0} & 25.5/\goodnum{17.8}/\goodnum{22.4} & 11.6/\goodnum{8.1}/\goodnum{10.8}\\
 & Textures & Far & 13.8/\goodnum{5.6}/\goodnum{10.7} & 7.4/\goodnum{2.8}/\goodnum{4.9} & 2.7/\goodnum{1.4}/\goodnum{2.2}\\
\addlinespace[1.5pt]
CIFAR-100 & CIFAR-10 & Near & 88.7/\goodnum{82.8}/\goodnum{88.0} & 87.3/\goodnum{63.7}/\goodnum{84.5} & 52.2/\badnum{56.8}/\goodnum{51.5}\\
 & TinyImageNet & Near & 93.3/\goodnum{64.8}/\goodnum{91.7} & 88.1/\goodnum{73.4}/\goodnum{86.2} & 63.4/\badnum{68.1}/\neutralnum{63.4}\\
 & MNIST & Far & 61.8/\goodnum{38.6}/\goodnum{54.2} & 100.0/\goodnum{54.7}/\neutralnum{99.9} & 8.2/\badnum{11.4}/\goodnum{8.0}\\
 & SVHN & Far & 63.4/\goodnum{41.4}/\goodnum{57.6} & 46.4/\badnum{49.6}/\goodnum{43.7} & 0.8/\badnum{2.0}/\neutralnum{0.8}\\
 & Places365 & Far & 97.8/\goodnum{57.3}/\goodnum{96.3} & 98.6/\goodnum{71.2}/\goodnum{97.6} & 55.3/\goodnum{49.0}/\goodnum{53.3}\\
 & Textures & Far & 92.0/\goodnum{59.6}/\goodnum{89.4} & 94.1/\goodnum{38.6}/\goodnum{89.3} & 31.7/\badnum{35.0}/\goodnum{30.9}\\
\bottomrule
\end{tabular*}
\end{table*}

\begin{table*}[p]
\centering\scriptsize
\setlength{\tabcolsep}{7pt}
\renewcommand{\arraystretch}{0.94}
\caption{\textbf{Unaggregated ID$\times$OOD-source MCM results (part 2/5).} Each VLM cell is Raw/CEG/CEG-P FPR95$\downarrow$ (\%). Guarded numbers are green/red when lower/higher than Raw by more than $0.10$ points and gray for a practical tie. No OOD source or DomainNet style is averaged. Guarded variants average five ID-only calibration seeds.}
\label{tab:mcm-per-ood-2}
\begin{tabular*}{\textwidth}{@{\extracolsep{\fill}}lllrrr@{}}
\toprule
ID dataset & OOD source & Type & CLIP-B/16 & SigLIP2-B/16 & PE-Core-B/16\\
\midrule
MNIST & notMNIST & Near & 10.3/\goodnum{8.8}/\goodnum{9.4} & 80.1/\goodnum{64.2}/\goodnum{78.5} & 60.8/\goodnum{32.8}/\goodnum{57.6}\\
 & FashionMNIST & Near & 41.9/\goodnum{31.4}/\goodnum{39.1} & 75.3/\goodnum{71.7}/\goodnum{73.8} & 37.0/\goodnum{3.3}/\goodnum{32.7}\\
 & Textures & Far & 32.4/\goodnum{17.4}/\goodnum{26.9} & 93.7/\goodnum{24.3}/\goodnum{90.7} & 31.0/\goodnum{6.6}/\goodnum{26.0}\\
 & CIFAR-10 & Far & 28.0/\goodnum{19.6}/\goodnum{25.0} & 96.6/\goodnum{49.4}/\goodnum{95.5} & 23.9/\goodnum{5.4}/\goodnum{20.7}\\
 & TinyImageNet & Far & 30.1/\goodnum{4.6}/\goodnum{23.4} & 90.5/\goodnum{52.2}/\goodnum{88.5} & 28.4/\goodnum{5.5}/\goodnum{24.7}\\
 & Places365 & Far & 46.4/\goodnum{10.0}/\goodnum{39.2} & 97.2/\goodnum{58.0}/\goodnum{96.4} & 43.7/\goodnum{3.1}/\goodnum{38.4}\\
\addlinespace[1.5pt]
IN-200 & SSB-hard & Near & 61.1/\goodnum{53.6}/\goodnum{58.7} & 37.1/\badnum{41.5}/\goodnum{36.9} & 48.2/\badnum{49.6}/\goodnum{47.7}\\
 & NINCO & Near & 53.3/\goodnum{46.3}/\goodnum{51.4} & 29.8/\badnum{35.2}/\neutralnum{29.8} & 38.5/\badnum{41.6}/\goodnum{38.0}\\
 & iNaturalist & Far & 6.0/\goodnum{3.5}/\goodnum{4.8} & 3.2/\badnum{5.5}/\neutralnum{3.3} & 4.9/\badnum{8.7}/\neutralnum{4.8}\\
 & OpenImage-O & Far & 18.7/\goodnum{14.7}/\goodnum{16.7} & 6.7/\badnum{8.4}/\goodnum{6.3} & 8.3/\badnum{10.3}/\goodnum{8.0}\\
\addlinespace[1.5pt]
CUB-200 & SSB-hard & Near & 47.2/\goodnum{14.6}/\goodnum{41.7} & 36.3/\goodnum{9.5}/\goodnum{31.0} & 16.5/\goodnum{9.3}/\goodnum{14.6}\\
 & NINCO & Near & 29.9/\goodnum{0.3}/\goodnum{23.7} & 19.9/\goodnum{0.3}/\goodnum{14.8} & 1.8/\goodnum{0.3}/\goodnum{1.1}\\
 & iNaturalist & Far & 10.5/\goodnum{0.5}/\goodnum{7.5} & 6.3/\goodnum{0.2}/\goodnum{4.1} & 0.1/\badnum{0.3}/\neutralnum{0.1}\\
 & OpenImage-O & Far & 16.9/\goodnum{0.5}/\goodnum{12.9} & 22.5/\goodnum{0.3}/\goodnum{16.9} & 1.0/\goodnum{0.2}/\goodnum{0.7}\\
\addlinespace[1.5pt]
Food-101 & SSB-hard & Near & 12.9/\goodnum{6.1}/\goodnum{11.7} & 14.3/\goodnum{6.3}/\goodnum{12.1} & 11.4/\goodnum{6.2}/\goodnum{10.7}\\
 & NINCO & Near & 16.6/\goodnum{11.1}/\goodnum{14.7} & 22.8/\goodnum{14.7}/\goodnum{19.4} & 13.1/\goodnum{12.0}/\goodnum{12.5}\\
 & iNaturalist & Far & 0.9/\goodnum{0.2}/\goodnum{0.7} & 2.8/\goodnum{1.6}/\goodnum{2.0} & 0.8/\goodnum{0.7}/\neutralnum{0.8}\\
 & OpenImage-O & Far & 4.0/\goodnum{2.2}/\goodnum{3.4} & 4.5/\goodnum{2.5}/\goodnum{3.6} & 2.8/\goodnum{1.8}/\goodnum{2.4}\\
\bottomrule
\end{tabular*}
\end{table*}

\begin{table*}[p]
\centering\scriptsize
\setlength{\tabcolsep}{7pt}
\renewcommand{\arraystretch}{0.94}
\caption{\textbf{Unaggregated ID$\times$OOD-source MCM results (part 3/5).} Each VLM cell is Raw/CEG/CEG-P FPR95$\downarrow$ (\%). Guarded numbers are green/red when lower/higher than Raw by more than $0.10$ points and gray for a practical tie. No OOD source or DomainNet style is averaged. Guarded variants average five ID-only calibration seeds.}
\label{tab:mcm-per-ood-3}
\begin{tabular*}{\textwidth}{@{\extracolsep{\fill}}lllrrr@{}}
\toprule
ID dataset & OOD source & Type & CLIP-B/16 & SigLIP2-B/16 & PE-Core-B/16\\
\midrule
Pets & SSB-hard & Near & 15.0/\goodnum{1.2}/\goodnum{12.3} & 8.7/\goodnum{1.0}/\goodnum{6.8} & 2.4/\goodnum{0.7}/\goodnum{1.8}\\
 & NINCO & Near & 8.9/\goodnum{0.6}/\goodnum{6.9} & 3.8/\goodnum{0.2}/\goodnum{2.3} & 1.6/\goodnum{0.3}/\goodnum{1.3}\\
 & iNaturalist & Far & 2.0/\goodnum{0.0}/\goodnum{1.1} & 7.8/\goodnum{0.0}/\goodnum{5.7} & 0.0/\neutralnum{0.0}/\neutralnum{0.0}\\
 & OpenImage-O & Far & 4.0/\goodnum{0.1}/\goodnum{2.8} & 8.9/\goodnum{0.1}/\goodnum{6.6} & 0.1/\neutralnum{0.0}/\neutralnum{0.1}\\
\addlinespace[1.5pt]
Caltech-101 & SSB-hard & Near & 60.0/\goodnum{42.9}/\goodnum{54.4} & 16.8/\badnum{17.4}/\goodnum{16.1} & 36.9/\goodnum{29.3}/\goodnum{33.2}\\
 & NINCO & Near & 54.9/\goodnum{43.5}/\goodnum{50.8} & 26.2/\badnum{27.4}/\goodnum{25.4} & 44.6/\badnum{45.6}/\goodnum{42.9}\\
 & iNaturalist & Far & 10.3/\goodnum{7.8}/\goodnum{8.3} & 1.7/\badnum{3.2}/\goodnum{1.6} & 11.3/\badnum{14.4}/\goodnum{10.0}\\
 & OpenImage-O & Far & 24.8/\goodnum{17.8}/\goodnum{21.1} & 5.2/\badnum{6.1}/\goodnum{4.9} & 13.9/\badnum{15.4}/\goodnum{12.8}\\
\addlinespace[1.5pt]
EuroSAT & SSB-hard & Near & 99.8/\goodnum{54.7}/\neutralnum{99.7} & 99.8/\goodnum{59.2}/\neutralnum{99.8} & 98.9/\goodnum{10.9}/\goodnum{98.4}\\
 & NINCO & Near & 99.8/\goodnum{44.2}/\neutralnum{99.7} & 99.5/\goodnum{67.1}/\goodnum{99.3} & 98.7/\goodnum{22.7}/\goodnum{98.1}\\
 & iNaturalist & Far & 100.0/\goodnum{99.0}/\neutralnum{100.0} & 100.0/\goodnum{99.8}/\neutralnum{100.0} & 100.0/\goodnum{79.0}/\neutralnum{100.0}\\
 & OpenImage-O & Far & 98.9/\goodnum{62.3}/\goodnum{98.6} & 99.4/\goodnum{74.4}/\neutralnum{99.3} & 98.0/\goodnum{37.0}/\goodnum{97.4}\\
\addlinespace[1.5pt]
DTD & SSB-hard & Near & 89.7/\goodnum{54.0}/\goodnum{87.0} & 64.4/\goodnum{62.7}/\goodnum{62.8} & 50.2/\badnum{55.0}/\goodnum{49.8}\\
 & NINCO & Near & 86.7/\goodnum{60.4}/\goodnum{84.6} & 67.6/\goodnum{65.8}/\goodnum{66.4} & 54.2/\badnum{64.7}/\badnum{54.7}\\
 & iNaturalist & Far & 76.0/\goodnum{55.0}/\goodnum{72.8} & 66.2/\badnum{69.6}/\goodnum{65.0} & 31.1/\badnum{45.2}/\neutralnum{31.1}\\
 & OpenImage-O & Far & 81.3/\goodnum{61.6}/\goodnum{78.7} & 60.3/\goodnum{59.6}/\goodnum{58.7} & 46.9/\badnum{51.5}/\goodnum{45.9}\\
\bottomrule
\end{tabular*}
\end{table*}

\begin{table*}[p]
\centering\scriptsize
\setlength{\tabcolsep}{7pt}
\renewcommand{\arraystretch}{0.94}
\caption{\textbf{Unaggregated ID$\times$OOD-source MCM results (part 4/5).} Each VLM cell is Raw/CEG/CEG-P FPR95$\downarrow$ (\%). Guarded numbers are green/red when lower/higher than Raw by more than $0.10$ points and gray for a practical tie. No OOD source or DomainNet style is averaged. Guarded variants average five ID-only calibration seeds.}
\label{tab:mcm-per-ood-4}
\begin{tabular*}{\textwidth}{@{\extracolsep{\fill}}lllrrr@{}}
\toprule
ID dataset & OOD source & Type & CLIP-B/16 & SigLIP2-B/16 & PE-Core-B/16\\
\midrule
VOC-13 & SSB-hard & Near & 76.8/\goodnum{73.1}/\goodnum{76.4} & 72.2/\goodnum{67.7}/\goodnum{71.2} & 67.8/\goodnum{67.1}/\goodnum{67.3}\\
 & NINCO & Near & 48.5/\goodnum{43.9}/\goodnum{48.3} & 49.3/\goodnum{41.1}/\goodnum{47.9} & 35.6/\badnum{40.0}/\goodnum{35.2}\\
 & iNaturalist & Far & 12.7/\goodnum{10.9}/\goodnum{11.9} & 43.3/\goodnum{41.9}/\goodnum{41.3} & 20.2/\badnum{20.5}/\goodnum{17.4}\\
 & OpenImage-O & Far & 40.4/\goodnum{37.3}/\goodnum{40.1} & 37.8/\goodnum{30.9}/\goodnum{35.4} & 28.0/\goodnum{27.1}/\goodnum{26.0}\\
\addlinespace[1.5pt]
DN-real & SSB-hard & Near & 83.2/\badnum{87.1}/\badnum{83.6} & 69.6/\badnum{76.7}/\badnum{70.2} & 81.5/\badnum{83.1}/\neutralnum{81.5}\\
 & NINCO & Near & 72.2/\badnum{76.5}/\neutralnum{72.3} & 54.9/\badnum{67.0}/\badnum{56.1} & 68.6/\badnum{77.6}/\badnum{69.5}\\
 & iNaturalist & Far & 60.0/\badnum{77.1}/\badnum{62.0} & 79.6/\badnum{92.1}/\badnum{81.8} & 71.6/\badnum{87.8}/\badnum{74.6}\\
 & OpenImage-O & Far & 65.6/\badnum{73.0}/\badnum{66.2} & 54.1/\badnum{62.9}/\badnum{54.7} & 60.8/\badnum{68.6}/\badnum{61.3}\\
\addlinespace[1.5pt]
DN-clipart & SSB-hard & Near & 95.6/\goodnum{92.9}/\goodnum{95.2} & 82.8/\badnum{89.8}/\badnum{83.2} & 88.9/\goodnum{88.8}/\neutralnum{89.0}\\
 & NINCO & Near & 90.8/\goodnum{88.0}/\goodnum{90.4} & 74.5/\badnum{85.1}/\badnum{75.0} & 80.9/\badnum{87.0}/\badnum{82.0}\\
 & iNaturalist & Far & 93.6/\badnum{94.2}/\badnum{93.8} & 97.2/\badnum{98.9}/\badnum{97.6} & 91.0/\badnum{95.4}/\badnum{92.2}\\
 & OpenImage-O & Far & 88.4/\goodnum{86.2}/\goodnum{88.0} & 71.7/\badnum{80.6}/\badnum{72.1} & 74.4/\badnum{78.8}/\badnum{74.8}\\
\addlinespace[1.5pt]
DN-painting & SSB-hard & Near & 96.9/\goodnum{93.3}/\goodnum{96.3} & 89.5/\badnum{92.3}/\badnum{89.7} & 94.1/\neutralnum{94.0}/\neutralnum{94.1}\\
 & NINCO & Near & 92.7/\goodnum{88.0}/\goodnum{92.2} & 85.8/\badnum{88.3}/\neutralnum{85.8} & 90.6/\badnum{92.4}/\badnum{90.8}\\
 & iNaturalist & Far & 96.0/\goodnum{94.2}/\neutralnum{95.9} & 99.2/\badnum{99.4}/\neutralnum{99.2} & 98.2/\neutralnum{98.2}/\badnum{98.3}\\
 & OpenImage-O & Far & 90.9/\goodnum{86.6}/\goodnum{90.2} & 80.6/\badnum{84.2}/\badnum{80.8} & 84.1/\badnum{86.3}/\badnum{84.3}\\
\bottomrule
\end{tabular*}
\end{table*}

\begin{table*}[p]
\centering\scriptsize
\setlength{\tabcolsep}{7pt}
\renewcommand{\arraystretch}{0.94}
\caption{\textbf{Unaggregated ID$\times$OOD-source MCM results (part 5/5).} Each VLM cell is Raw/CEG/CEG-P FPR95$\downarrow$ (\%). Guarded numbers are green/red when lower/higher than Raw by more than $0.10$ points and gray for a practical tie. No OOD source or DomainNet style is averaged. Guarded variants average five ID-only calibration seeds.}
\label{tab:mcm-per-ood-5}
\begin{tabular*}{\textwidth}{@{\extracolsep{\fill}}lllrrr@{}}
\toprule
ID dataset & OOD source & Type & CLIP-B/16 & SigLIP2-B/16 & PE-Core-B/16\\
\midrule
DN-sketch & SSB-hard & Near & 96.5/\goodnum{94.4}/\goodnum{96.1} & 90.8/\badnum{94.0}/\badnum{91.3} & 95.2/\goodnum{94.5}/\neutralnum{95.3}\\
 & NINCO & Near & 92.1/\goodnum{89.6}/\goodnum{91.7} & 87.9/\badnum{89.9}/\neutralnum{87.9} & 92.5/\badnum{93.0}/\badnum{92.9}\\
 & iNaturalist & Far & 95.2/\badnum{95.4}/\badnum{95.4} & 99.4/\neutralnum{99.5}/\neutralnum{99.5} & 98.9/\goodnum{98.4}/\badnum{99.0}\\
 & OpenImage-O & Far & 90.1/\goodnum{87.8}/\goodnum{89.8} & 82.6/\badnum{86.2}/\badnum{82.7} & 86.4/\badnum{87.7}/\badnum{86.7}\\
\addlinespace[1.5pt]
DN-quickdraw & SSB-hard & Near & 99.7/\goodnum{70.0}/\goodnum{99.6} & 96.5/\goodnum{82.5}/\goodnum{96.0} & 99.2/\goodnum{62.2}/\goodnum{99.0}\\
 & NINCO & Near & 97.6/\goodnum{63.0}/\goodnum{97.3} & 95.2/\goodnum{81.1}/\goodnum{94.6} & 98.4/\goodnum{69.9}/\goodnum{98.1}\\
 & iNaturalist & Far & 99.7/\goodnum{62.4}/\goodnum{99.6} & 99.9/\goodnum{95.0}/\neutralnum{99.9} & 100.0/\goodnum{64.1}/\neutralnum{100.0}\\
 & OpenImage-O & Far & 97.9/\goodnum{62.3}/\goodnum{97.4} & 90.9/\goodnum{76.0}/\goodnum{90.1} & 96.4/\goodnum{55.6}/\goodnum{95.8}\\
\addlinespace[1.5pt]
DN-infograph & SSB-hard & Near & 96.2/\badnum{97.4}/\neutralnum{96.2} & 96.8/\goodnum{96.2}/\neutralnum{96.7} & 99.4/\goodnum{96.0}/\goodnum{99.3}\\
 & NINCO & Near & 91.6/\badnum{93.8}/\badnum{91.9} & 95.5/\goodnum{92.0}/\goodnum{95.1} & 98.6/\goodnum{94.7}/\goodnum{98.5}\\
 & iNaturalist & Far & 94.7/\badnum{97.7}/\badnum{95.2} & 99.9/\goodnum{99.5}/\neutralnum{99.9} & 100.0/\goodnum{98.5}/\neutralnum{100.0}\\
 & OpenImage-O & Far & 89.5/\badnum{92.1}/\badnum{89.8} & 91.3/\goodnum{89.5}/\goodnum{91.0} & 96.9/\goodnum{91.2}/\goodnum{96.6}\\
\bottomrule
\end{tabular*}
\end{table*}

\clearpage

% AAAI requires the completed reproducibility checklist after the paper.
% Fill checklist/ReproducibilityChecklist.tex, then uncomment the next line.
% \input{checklist/ReproducibilityChecklist}

\end{document}